\theoremstyle{plain}
\newtheorem{theorem}{Theorem}
\newtheorem{corollary}[theorem]{Corollary}
\newtheorem{lemma}[theorem]{Lemma}
\newtheorem{assumption}[theorem]{Assumption}
\theoremstyle{definition}
\newtheorem{definition}[theorem]{Definition}
\title{A Theory for Emergence of Complex Skills in Language Models}
\author{%
  Sanjeev Arora \\
  Princeton University\thanks{Work done while at Google DeepMind} \And Anirudh Goyal \\ Google DeepMind}
\begin{document}

\maketitle

\begin{abstract}

A driver of current AI research  is the fact that new skills emerge in language models when their parameter set and training corpora are scaled up.  This phenomenon is poorly understood, and a mechanistic explanation via mathematical analysis of gradient-based training seems difficult. The current paper takes a different approach, analysing emergence using the famous  (and empirical) Scaling Laws of LLMs and a simple statistical framework. Contributions include: (a) A statistical framework that relates cross-entropy loss of LLMs to competence on the basic skills that underlie language tasks. (b) Mathematical analysis showing that the Scaling Laws imply a strong form of inductive bias that allows the pre-trained model to learn very efficiently. We informally call this {\em slingshot generalization} since, naively viewed, it appears to give competence levels at skills that violate usual generalization theory. (c)  A key example of slingshot generalization, that competence at ``complex skills,'' which involve $t$-tuples of basic skills, emerges essentially at very similar scaling as competence on the elementary skills themselves.  
\end{abstract}

\vspace{-4mm}
\section{Introduction}
	\label{sec:intro}
As language models scale up, via an increase in both the number of parameters and the size of the training datasets, they exhibit remarkable new behaviors \cite{brown2020language, ganguli2022predictability, srivastava2022beyond, wei2022emergent} ---this phenomenon is often termed {\em emergence}. Some emergent properties were noticed by early model designers and have since been confirmed through experiments with substantially larger models such as GPT \cite{brown2020language, DBLP:journals/corr/abs-2303-08774}, PaLM \cite{chowdhery2022palm} and PaLM-2 \citep{anil2023palm}. Ultimate forms of emergence are {\em in-context Learning}~\cite{brown2020language} and {\em zero-shot learning}, whereby the model can understand task instructions given as part of its input and solve the task. Attempting to explain this range of phenomena solely through a mathematical analysis of gradient-based training appears challenging, and appears to call for new thinking.

The model exhibiting new ``behaviors'' upon deployment is also of obvious interest in discussions about AI safety and alignment.  A contrarian view downplaying such concerns is that  all learned behaviors were already present  somewhere in the (massive) training corpus ---the so-called ``stochastic parrots'' view~\cite{stochparrots}.

The current paper introduces new frameworks to derive mathematical understanding of such phenomena. But first we recall (well-known) challenges  in such a line of inquiry.

\begin{enumerate}
\item The phenomenon is not well-defined! Quantifying emergence of new ``skills'' requires formulating what ``language skills'' are, which is  tricky. Formalizations using { \em Probabilistic Context-Free Grammars}, {\em Boolean logic}, {\em Combinatorial Categorial Grammars},  {\em Dependency Grammars}, {\em Gricean theories},   {\em Frame Theory} and {\em Category Theory}~\cite{chomsky57,morphologyhandbook,grice75, steedman96, coeckeetal2010,tannen94} capture essential aspects of this. But it seems difficult (perhaps impossible) to integrate all of these into a single framework and connect it to statistical frameworks undelying LLMs, namely, next-word prediction using cross-entropy loss.
    
    \item Multiple skills tend to emerge roughly {\em together}. The theory has to explain how they are connected. 
    
    \item Finally, the model appears capable of flexibly combining its various capabilities during in-context learning. (See Appendix \ref{appendix:gpt_examples} for examples.)  Prior attempts to formalize ``combinations'' got highly technical, e.g. involving category theory~\cite{}.  
\end{enumerate}

\subsection{Our contributions}  \label{subsec:contributions}

We introduce new types of mathematically rigorous ---yet elementary---frameworks and analyses that stay recognizably close to current statistical frameworks, and yet offer insights into above phenomena. Since analysis of training and generalization has proved difficult, our theory  assumes  {\em LLM Scaling Laws}, which are empirically relationships describing reduction in language modeling loss with  model scale. The ``language distribution'' is conceptualized as a distribution over finite pieces of text, called {\em text-pieces}. The following are the key elements of the theory: (i) {\em A framework for conceptualizing skills.} The theory assumes there {\em exists} a set of ground-truth skills and that  text-pieces are generated by an unknown process that takes random tuples of skills and converts them into a text-piece whose comprehension requires those skills. This is best visualized as a bipartite graph  (``skill graph'') with skills on one side and text-pieces on the other. An edge $(s, t)$ means that understanding of the text-piece $t$ requires applying skill $s$. (See Figure~\ref{fig:skillgraph}.)
The framework assumes that ``understanding'' of a text-piece is testable by simple multiple-choice (``cloze'') questions (see Section ~\ref{sec:excessentropy}) inserted (by another unknown process) into the text at test time. The model does not need to predict the content of the questions ---it  only needs to predict the answer, and it is penalized for any incorrect guesses. (ii) {\em Statistical tasks associated with skills and skill-tuples:} Having cast language understanding as ability to answer cloze questions, we can define statistical tasks corresponding to  ``skills'' and ``skill $k'$-tuples'' as follows.   ``Competence'' on the skill is a fraction between $0$ and $1$ corresponding to the model's success rate when presented with all cloze questions from text-piece selected randomly among all text-pieces adjacent to the particular skill node. Competence in $k'$-tuple of skills where $k' \geq 1$ is the ability to correctly answer cloze questions in a randomly selected text-piece that is connected to all skills in the $k'$-tuple. (iii) {\em  How competencies evolve with scaling.}  Random graph theory is used to give rates at which   competencies in skills and skill-tuples improve with scaling. It is shown that competence in skill-tuples improves almost as fast as that in individual skills. (Theorem~\ref{thm:genmeasure}.)

\begin{wrapfigure}{r}{0.5\textwidth}
    \centering
    \includegraphics[scale=0.15]{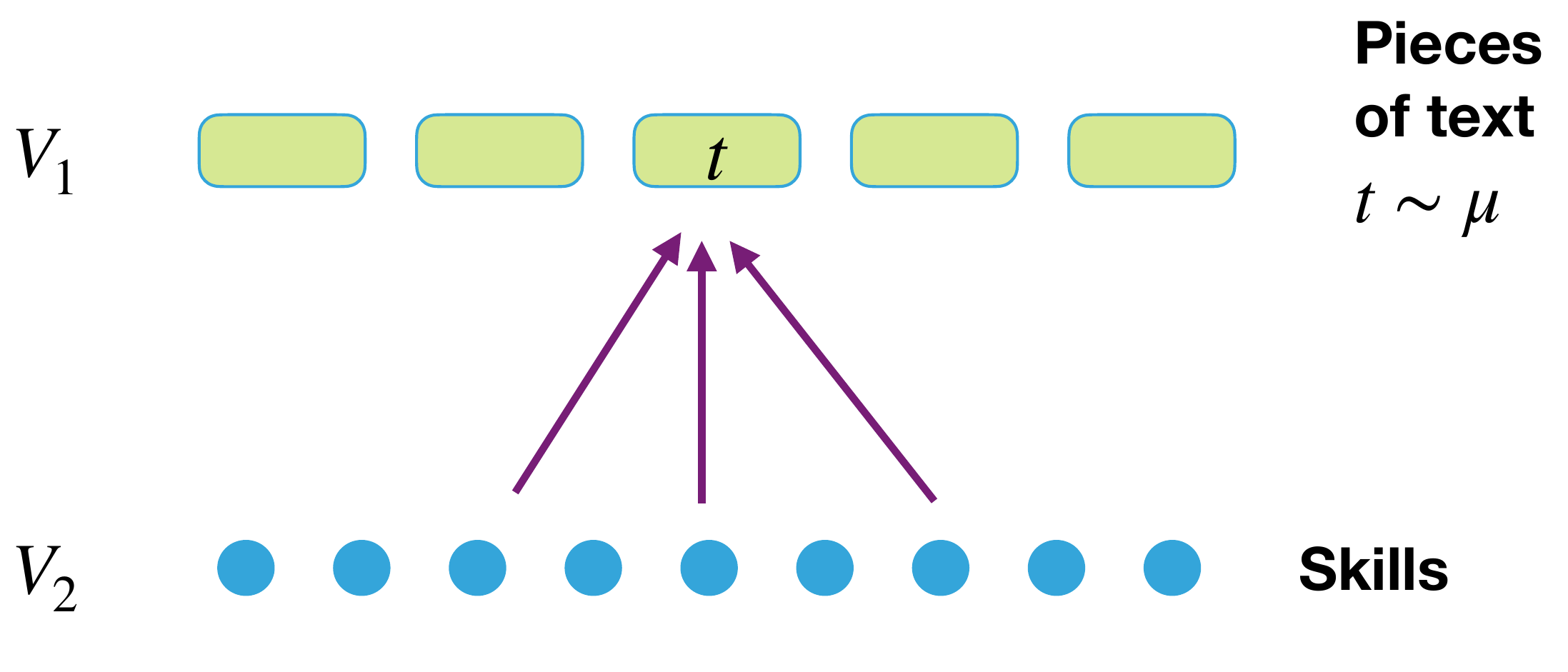}
    \caption{Skill Graph.}
    \label{fig:skillgraph}
\vspace{-3mm}
\end{wrapfigure}

\noindent{\bf {\em Poverty of the stimulus} angle:} Reasonable competence at $k'$-tuples of skills is a simple example of {\em compositional generalization}.  The phrase {\em Poverty of the Stimulus}  is often used (\cite{chomsky88})  to highlight that humans apply linguistic structures in new combinations. (See Section~\ref{sec:preliminaries}.) A similar issue also arises in our setting.  Our analysis allows the set of skills to be almost as large as the training corpus, which implies all individual skills were likely to have been seen during training. However,  the number of $k'$-tuples of skills scales as the $k'$th power of the  number of skills, which is much larger than the size of the training corpus.  Therefore if the model displays competency on even 
$10\%$ of the $k'$-tuples of skills then it must have somehow acquired competence in $k'$-tuples that were not seen during training. If we think of the $k'$-tuple of skills as a ``complex skills'' then we must conclude that the model (through some internal combinational process) has acquired many such complex skills despite not having seen that combination in training data.

\noindent{\bf Potential conceptual frameworks for next-generation AI models:} The ongoing reworking of  Language Models   into AI agents is accompanied by a shift away from the old paradigm of simply training a model to predict the next word   in a text corpus. Instead, the ``corpus'' is a carefully weighted/curated combination of  data, which could include code, math/logical reasoning, images, synthetic text, etc. Training could involve new kinds of losses.  Our conceptual framework seems applicable  to these new settings, since it is agnostic about what ``skills'' and ``text'' are, and how they compose. The analysis is also adaptable to prediction losses  other than cross-entropy.  For simplicity, we choose to describe the framework in context of vanilla language modeling.

\paragraph{Paper organization:} Section \ref{sec:preliminaries} provides a brief introduction to scaling laws, emergence, and random bipartite graphs. Section \ref{sec:excessentropy} explores the connection between the reduction in excess cross-entropy and learning.  
Sections~\ref{subsec:emergence} and~\ref{subsec:measure} give concrete results about  emergence of skills as a result of scaling.

\section{Preliminaries} 
\label{sec:preliminaries}
Deep-learning based language models follows a statistical paradigm: occurrences of linguistic units (say, words, sentences, paragraphs) are assumed to fit a statistical profile, 
and thus pieces of text from a sufficiently large and diverse corpus are assumed to be samples from a probability distribution \cite{bengio2000neural}. 
Language models are trained to solve {
\em next-word-prediction} tasks: Given, say, the past 500 words\footnote{Actual training involves breaking up words into smaller  {\em tokens}, which allows  a single model to handle all human languages,  math formulae, computer code,  etc. For simplicity, our discussion will refer to ``words.''}  in the text, predict the next word. Faced with such a task,  humans may be able to give many completions, and the modeling assumes that frequencies of these completions can be modeled via probability. 
Thus the model $M$ computes a probability $\Pr_M[w_{i+1}~|~w_{1}w_2\ldots w_{i}]$ for all possible words $w_{i+1}$'s. 
The goodness of a model is computed by its {\em cross entropy loss}, which for a sequence of words $w_1w_2\ldots w_t$ is:
\begin{equation} \label{eqn:CEloss}
\ell(M) =   - \sum_{i} \log \Pr_M[w_{i+1}~|~w_{1}w_2\ldots w_{i}] \qquad \mbox{(Cross Entropy)}
\end{equation}
Models are trained by minimizing (via gradient descent) this training loss  on a text corpus, and their goodness is computed by their {\em test loss}---evaluating the same loss expression on
a held-out text from the same corpus. Often the training corpus is so large that the model trains only once (or a few times) on each piece of text, and by the end, 
the test loss on held-out text is almost the same as the training loss.  

\paragraph{Scaling laws:}  These empirically-derived expressions describe how test cross entropy loss on held-out data scales (in experiments) with number of model parameters (N)  and size of the dataset (D) \cite{cortes1993learning, hestness2017deep, kaplan2020scaling, bahri2021explaining}. For Chinchilla models~\cite{hoffmann2022training} the law is as follows:
\begin{equation}
L(N, D) = A + \frac{B}{N^{0.34}} + \frac{C}{D^{0.28}} \qquad  A = 1.61 \quad B = 406.4 \quad C = 410.7 \label{eqn:scaling}.
\end{equation}
Here the constants $A, B, C$ in \ref{eqn:scaling}  hold only for the  specific architecture and training strategy ---even the constant  $A$  depends upon the tokenization. This description of macro behavior using two basic parameters ---reminiscent of 2nd Law of Thermodynamics--- will help us circumvent the need for mechanistic understanding of training. Our theory will only rely upon the general form of the equations, specifically, that the dependence is inverse polynomial in $N, D$. So it applies to other frameworks of training (e.g., overtrained models~\cite{muennighoff2023scaling}) where scaling laws have also been found. 

\paragraph{Emergence:}  Emergence refers to an interesting empirical phenomenon that as $D, N $ are increased together then the 
model's performance (zero shot or few-shot) on a {\em broad range} of language tasks improves in a correlated way. The improvement can appear as a quick transition when $D, N$ are plotted on a log scale (which is often the case) but it is now generally accepted that for most tasks the performance improves gradually when $D, N$ are scaled up. Thus the term {\em slow emergence} is more correct.
Furthermore,  it is known that emergence happens at different rates for different tasks, and is often quickest for tasks where the text is plausibly close to text found in training data \cite{wei2022emergent}.
Plenty of tasks are known that stump current models, and they usually tend to be very different from what one would  find in usual text corpora.  See \citep{wei2022emergent,srivastava2022beyond,schaeffermirage23} for experimental results on emergence rates of the broad range of language tasks. 
One might thus posit, with some justification from the above-mentioned studies,  that the emergence of skills arises from training on related tasks that were implicitly solved while solving next-word prediction in the training dataset. This is indeed our starting point. 

\paragraph{Poverty of Stimulus with respect to skill combinations:} Any expert who has conversed with popular chat agents quickly discovers that at the very least they seem to flexibly solve tasks that require {\em combinations} of simple skills.
However, the number of $k'$-wise combinations of elementary skills feels too large (say, with $k' =4$ and the number of elementary skills is in the tens of thousands) for all of the combinations to even be present in the training dataset.  In other words, if the model indeed acquires ability to solve tasks involving $t$-tuples of skills, we are looking at the familiar {\em Poverty of the Stimulus} issue\footnote{Chomsky coined the phrase {\em Poverty of the Stimulus}  to emphasize that babies do not have enough data to learn language, and concluded that evolution must have led to a ``universal grammar'' that humans must be be born with.
In our  framework, the language model (whose learning is initialized using Gaussian noise) also seems to defy paucity of stimulus with respect to skill combinations.} which we return to in Section~\ref{subsec:ktuples}.

\subsection{Cross-Entropy, Entropy, and Excess entropy}
\label{subsec:excessce}
\newcommand{\Ex}{\mathbb{E}}
The conceptual framework underlying  cross-entropy loss (\ref{eqn:CEloss}) is that there is a ground-truth (i.e., humans') distribution for generating the next word, which assigns probability $p_i(w~|~w_1w_2\ldots w_i)$ to the event that the $(i+1)$th word is $w$ given that the previous words were $w_1w_2\ldots w_i$. In interest of compact notation we shorten $p_i(w~|~w_1w_2\ldots w_i)$  to $p_i(w)$,
Thus the {\em entropy} of the $(i+1)$th word 
\begin{equation}
     \sum_w p_i(w) \log \frac{1}{p_i(w)} ~~~~(\text{\sc entropy})
\end{equation}
This entropy is an inherent property of language, due to existence of many possible choices human writers can make for the next word. Given sequence $w_1 w_2 \ldots w_i$ the model has a probability distribution $q(w|w_1 w_2 \ldots w_i)$ for the next word $w$.  Extending our compact notation, we use $q_i(w)$ as a shorthand for this. The  cross-entropy loss of the model on the $i+1$th word is $\log \frac{1}{q(w_{i+1})}$, which should be seen as an empirical estimate of 
\begin{equation} E_{w \sim p_i()}[\log \frac{1}{q(w)}]~~ \text{\sc (expected c-e loss)} \label{eqn:empiricalloss}
\end{equation} 
{\em KL divergence}, also sometimes called {\em excess entropy}, is  non-negative and defined as 
\begin{equation} KL(p_i|| q_i) = E_{w \sim p_i()}[\log \frac{p_i(w)}{q_i(w)}] ~~~\text{{\sc  excess entropy}}
\end{equation}
 Thus on a per-word basis we have:
 \begin{equation} \label{eqn:basicequation}
\text{{\sc expected c-e loss}}  =\text{{\sc entropy}} + \text{{\sc  excess entropy}}
\end{equation} 
Summing over the entire held out corpus, one obtains a similar estimate for the entire corpus.  One can make mild assumptions to the effect that the conditional probabilities $p_i(), q_i()$ only  depend only on (say) the previous $10^3$ words, whereas the corpus size $M$ is much bigger, e.g., $M \gg 10^7$. So the corpus consists of a random walk of sorts, where every $10^4$ words or so it switches to a different  portion of the language distribution. Under such assumptions  the above relationship, which holds in expectation at the word level, should  hold fairly precisely at the corpus level.

\paragraph{Understanding the Scaling Law in terms of excess entropy:}
In (\ref{eqn:scaling}) the $A$ term captures the entropy of language~\footnote{Here we're assuming that as the model and data set size tend to infinity in tandem, the model will perfectly learn the language distribution.}. No model, however good, can achieve lower cross-entropy loss than $A$ for large corpora. The second and third terms of (\ref{eqn:scaling}) capture {\em excess entropy}, and they decrease polynomially with $N$ and $D$. For example when $N, D$ are increased by a factor of $10$ it reduces by roughly $(10)^{0.28} \approx 2$. 

Section~\ref{sec:excessentropy} will argue that reductions in excess entropy lead to improvements in model capabilities. But note that there is no way to compute excess entropy using just the corpus. Relationship~(\ref{eqn:basicequation}) shows that estimating excess entropy requires knowing the inherent entropy of text, which requires humans in the picture.  It is not possible to look at the model's cross-entropy loss on the $i$th word  according to (\ref{eqn:CEloss}) and know---without asking humans--- how much is due to inherent cross-entropy and how much is excess.

\section{(Mis)understanding, Excess entropy, and Cloze Questions} 
\label{sec:excessentropy}

Thinking about emergence and Scaling Laws, it is possible to get confused as follows: {\em ``When we increase $D$ from $10^{11}$ to $10^{12}$ then according to~(\ref{eqn:scaling})  this  changes cross-entropy by a tiny amount. Why does it lead to big changes in macroscopic behavior?''} 
The flaw in this reasoning is that most of the loss captures merely the inherent entropy of language (the $A$ term in (\ref{eqn:scaling})). We argue now that the model's mistakes on downstream tasks (i.e., its  misunderstandings) are captured by the  {\em excess} entropy, which as noted in Section~\ref{subsec:excessce} reduces by a constant factor each time the model is scaled up by an order of magnitude\footnote{A recent empirical study~\cite{Xiatrajectories23} also concludes with the  finding that ``language modeling perplexity correlates well with few-shot in-context learning performance along the trajectory, regardless of model sizes.'' At the same time, it is known that two models with the same cross-entropy can differ somewhat in their performance on language tasks.} 

We illustrate using a classic example  from~\cite{winograd1971procedures}, which later inspired the {\em Winograd Schema Challenge(WSC)}~\cite{levesque2012winograd}:

\noindent {\tt The city councilmen refused the demonstrators a permit because they feared violence.} 

Here the pronoun {\tt they} is ambiguous--- grammar rules allow it to refer to either {\tt demonstrators} or {\tt city councilmen}. Winograd pointed out that disambiguating it (i.e., anaphora resolution) requires world knowledge that is
 unavailable in the  text itself, namely that demonstrations can get violent, and city councilmen don't like violence. 

A key idea in designing test-beds for language understanding such as WSC is  the {\bf Cloze Procedure}\footnote{Cloze questions are multiple choice, which allows testing most language skills~\cite{brown2020language}. Some skills such as understanding of irony  don't lend themselves well to cloze-based testing since one of the  multiple choices already explains the joke.  See \cite{saunshiexplore20} for earlier use of Cloze prompts in developing a theory of LLMs.}, popular also for testing language development in children~\cite{brownassessment}. To test the model's understanding of {\tt they} in this sentence, 
 we can append a {\em prompt}: {\tt Q. Who feared violence?}. This is followed by either a blank, or a choice of multiple answers:  {\tt A. city councilmen. B. demonstrators.}   For WSC examples, even though a human would be hundred percent sure of the answer, language models circa 2016 were roughly $50/50$ confused between the two options. 


In the above example, the human is $100\%$ certain of the answer, which implies their entropy here is $\log 1$, namely $0$. However if the model is split $50$-$50$ between the two options this implies it has cross-entropy  $\log 2$, all of which is {\em excess entropy}! Given the frequency of ambiguous pronouns in usual English, one concludes that a model that has not learned pronoun disambiguation will display huge excess entropy at many places in surrounding text. Thus reductions in excess entropy (which happen naturally due to scaling) will tend to squeeze out such errors. The rest of the paper tries to make this intuition mathematically precise.

Of course, text corpora  do not normally contain such artificial cloze questions. But one could imagine that the model's basic misunderstanding of the above type could, often, lead to prediction mistakes in neighboring text. Our theory in Section~\ref{sec:basictheory} will assume that cloze questions can closely capture the  model's misunderstanding.

\section{Mathematical Framework}
\label{sec:basictheory}

We give a mathematical framework for thinking about skills and how they might relate to language comprehension tasks such as pronoun disambiguation. First, it is assumed that language comprehension involves a set of skills, though the theory will not need to know a precise list.  (Scholars have  discovered and named thousands of skills. Well-trained transformers have   undoubtedly discovered many more that remain unnamed.)
Next, the theory will assume scaling laws such as (\ref{eqn:scaling}) and thus not need to reason about  training and generalization. Instead, it can reason directly about the model's behavior on the test distribution, i.e., the distribution from which the training data was drawn. We assume this test distribution is structured as a  long unordered list of text-pieces, each with an associated measure\footnote{Text-pieces should be thought of as having a size between a paragraph to a few pages, drawn from a longer corpus. To allow good prediction for the model, the text-piece could include ancillary text that preceded it the longer corpus. The model need not do predictions for the words in this ancillary text but can use it to make predictions on the text-piece.}  Traditional cross-entropy loss is averaged using this associated measure. 

\begin{definition}[Text piece] The test corpus for the model is viewed as being divided into  {\em text-pieces}, each consisting of $C_{test}$ tokens.  There is also a measure $\mu_2()$ on  these text-pieces, with $\mu_2(t)$ denoting the measure of text-piece $t$. The usual cross-entropy loss is computed by weighting text-pieces with respect to this measure. 
\end{definition}

Now we make some assumptions. We assume that the model's ``comprehension'' of a text piece is  testable via suitable cloze questions analogous to the Winograd example in Section~\ref{sec:excessentropy}. Specifically, we assume that an (unknown) process {\sc cloze} has been used to add such  cloze questions  to the text pieces at test time. These are clearly-marked multiple-choice questions in simple English that the model has to answer. Note that the training corpus did not contain such cloze questions, so this is a simple form of distribution shift at test time. The prediction loss on cloze questions does not require  predicting the location or contents of the cloze question ---it only requires  selecting the  correct answer to the  multiple-choice cloze question. 

We allow the process {\sc cloze} to tailor the questions to the model being tested. Thus the next assumption is reasonable. 

\begin{assumption}\label{assum:proportionalloss}[Cloze Sufficiency Assumption:]
{\em  The pre-trained model's average (multiclass) prediction loss on Cloze questions --- where the average is taken over the distribution of text pieces-- closely tracks (within a small multiplicative factor like 1.1)  the excess cross-entropy of the model on classical next-word prediction.} 
\end{assumption}
\noindent{\bf Note:} As discussed in Section~\ref{sec:excessentropy}, if the cloze question is assumed to be perfectly answerable by a human then any incorrect answers  by the model  can be interpreted analogously excess cross entropy. Our assumption  amounts to saying that mistakes on cloze questions closely capture the excess entropy of the model as defined in (\ref{eqn:CEloss}).  The next theorem, shows that there {\em exists} a set of cloze questions (albeit fairly artificial) where the excess cross-entropy  of the model's answer tracks the overall excess cross-entropy on next-word prediction. 

\begin{theorem}
If a model's  excess entropy at the $i$th place in text is $\epsilon$ then there is a cloze question with binary answer such that the probability that the model answers it incorrectly is at most $\sqrt{2\epsilon}$.
\end{theorem}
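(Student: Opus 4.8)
The plan is to derive the statement from Pinsker's inequality. Write $p=p_i(\cdot)$ for the ground-truth next-word distribution after $w_1w_2\ldots w_i$ and $q=q_i(\cdot)$ for the model's; by the decomposition (\ref{eqn:basicequation}) the per-word excess entropy is exactly $\epsilon = KL(p\|q)$. Pinsker's inequality gives $\|p-q\|_{\mathrm{TV}} = \max_{S}\,|p(S)-q(S)| \le \sqrt{\tfrac12\,KL(p\|q)} = \sqrt{\epsilon/2}$, and since $\sqrt{\epsilon/2}\le\sqrt{2\epsilon}$ it is enough to exhibit one binary cloze question on which the model errs with probability at most $\|p-q\|_{\mathrm{TV}}$.

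For the construction, take $S=\mathrm{supp}(p)$ --- the words a human could plausibly continue with --- and let the (admittedly artificial) cloze question ask whether the realized next word lies in $S$ or in $S^{c}$. Because the true continuation $w_{i+1}$ has positive human probability, the correct answer is ``$S$'' with certainty, whereas the model, answering from its own distribution $q$, returns the wrong answer with probability $q(S^{c}) = q(S^{c})-p(S^{c}) \le |p(S^{c})-q(S^{c})| \le \|p-q\|_{\mathrm{TV}} \le \sqrt{\epsilon/2}\le\sqrt{2\epsilon}$, using $p(S^{c})=0$. (If a non-degenerate question is wanted, replace $S$ by any event with $p(S)\ge 1-\delta$, choosing $\delta$ small enough to be absorbed by the slack between $\sqrt{\epsilon/2}$ and $\sqrt{2\epsilon}$, or take $S=\{w:p(w)\ge q(w)\}$, the maximizer of $p(S)-q(S)$, and run the same estimate via the contraction $KL(\mathrm{Ber}(p(S))\,\|\,\mathrm{Ber}(q(S)))\le KL(p\|q)$ for the two-valued coarsening.)

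The part that needs care is the modeling, not the inequality: one must pin down how the model ``answers'' a binary cloze question and argue that its error probability really is the mass discrepancy $|p(S)-q(S)|$ --- e.g.\ that the answer is drawn from (or is the MAP of) the induced $\mathrm{Ber}(q(S))$, so that the data-processing inequality makes Pinsker on the two-point space applicable. A secondary wrinkle is that this argument only produces a ``confident'' question when $p$ concentrates on a strict subset of the vocabulary; when the human is genuinely unsure of the next word one must instead build $S$ from the $p/q$ discrepancy itself, and it helps to note that $\sqrt{2\epsilon}$ already exceeds $1/2$ --- hence the bound is essentially vacuous --- once $\epsilon\gtrsim 1/8$. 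Establishing the converse direction (that some binary cloze question also has error $\Omega(\epsilon)$, so the cloze loss genuinely ``tracks'' the excess entropy) is the harder task and is not claimed by the statement.
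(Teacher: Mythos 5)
Your proof follows essentially the same route as the paper's: Pinsker's inequality bounds the total variation between $p_i$ and $q_i$ by $\sqrt{\epsilon/2}\le\sqrt{2\epsilon}$, and the cloze question is a two-block partition of the vocabulary with the model's error bounded by the mass discrepancy across the partition. The paper's single construction is the TV-maximizing set $A_{i+1}=\{w:p_i(w)\ge q_i(w)\}$, which you mention as a fallback; your primary choice $S=\mathrm{supp}(p)$ is cleaner when $p$ is not full-support because the human's answer is then deterministic, neatly avoiding the coupling question that you rightly flag lurking behind the paper's phrase ``the probability that the human and the model give different answers.''
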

\begin{proof}
    The proof involves Pinsker's Inequality (wikipedia version) which relates variation distance and KL divergence. As in Section~\ref{sec:excessentropy} let $p_i()$ be the humans' probability distribution for the $i+1$th word in the text piece and $q_i()$ be the model's distribution. 
    The probability that the  human and the model give different answers is the variation distance    between the two distributions, which is the maximum (over all subsets $A$ of words) of  $\sum_{w \in A} (p_i(w) - q_i(w))$. 
    Let  $A_{i+1}$ denote the subset for which the previous expression is maximised. The cloze question consists of replacing word $w_{i+1}$ in the text with  the question: {\em Is the next word among the words listed in option (a) or in option (b)}, where option (a) lists words in $A_{i+1}$ and (b) lists words in $\overline{{A}_{i+1}}$. 
   The theorem now follows from Pinsker's inequality.
\end{proof}

\subsection{Skills: A Statistical View}

Language is assumed to have an underlying set  $S$ of {\em skills}. Every text-piece $t$ has an associated set of skills that are required for comprehending it. The theory allows this set of skills to be quite large ---it only needs to be (a fair bit) smaller than the number of text-pieces in the distribution (an enormous number).  

\begin{definition}[skill graph]  
A {\em skill graph} is a bipartite graph $(S, T, E)$ where nodes in $S$ correspond to skills, 
nodes in $T$ correspond to text-pieces,  and  $(s, t)$ is in the edge set $ E$ if ``comprehending'' text-piece $t$  (i.e., answering its associated cloze questions) requires using skill $s$. (See Figure~\ref{fig:skillgraph})
\end{definition}

It is important to realize that  we are interested in  quantifying the model's {\em competence} on a skill. For example, while the above definition assumes there the distribution of text-pieces includes those whose comprehension requires the skill ``anaphora resolution,''    a language model (or even human individuals!) will in general be unable to apply the  skill correctly in all text pieces. Thus ``competence on anaphora resolution'' is not $0/1$ ---instead it is quantified as the fraction of text-pieces associated with this skill whose cloze questions were correctly answered by the model. Quantifying the success rate of this (in other words, the model's capabilities) is the goal of the rest of the paper.

The final element of our theory is that the skill-graph has  random edges, as made precise in Definition~\ref{def:nature}. To understand why this makes sense, we recall Winograd's example: {\em The city councilmen refused the demonstrators a permit because they feared violence}.  Winograd implicitly assumes that the trickiest skill needed here is pronoun/anaphora  resolution, but of course, applying that skill in this context requires other skills: understanding of causality (i.e., interpretation of ``because'') as well as world knowledge about ``city councilmen,''  ``permit,'' ``demonstrators,'' etc.  This example highlights the fact that if we were to look at random  text-pieces that require pronoun disambiguation, we would encounter random real-world scenarios, whose comprehension requires very different set of skills. Moreover, the scenarios (and hence the relevant skills) could have different probabilities of occurring in the corpus. 

For simplicity we assume that each text-piece requires exactly $k$ skills for some $k$, and this set was drawn by iid sampling from an underlying measure on the set of skills.
(Thinking of  $k$  as a random variable is natural but will not be considered here.) 
The next definition  formalizes the above framework in form of a {\em skill cluster}.

\begin{definition}[Degree-$k$ skill cluster] \label{def:nature} This is a skill graph $(S, T, E)$ where the collection of text pieces is generated by ``nature'' by applying the following process: pick a subset of $k$ skills via iid sampling from an underlying measure $\mu_1$ on skills, and then use a procedure {\sc gen} to create a text-piece $t$ whose comprehension requires these skills, as well as a measure $\mu_2(t)$ associated\footnote{Note that the measure on text-pieces has to have the correct marginals  e.g., the $\mu_2$-measure of all text-pieces containing a skill $s$ is $\mu_1(s)$. There are many measures satisfying this weak condition, since the number of text pieces is way larger than the number of skills.} with this text piece $t$.  Then nature uses process {\sc cloze} to add cloze prompts to test comprehension on $t$. The {\em prediction loss}  on the text-piece is the cross-entropy loss on predicting the answers to the cloze questions in it. The average prediction loss over all text-pieces is computed with respect to the measure $\mu_2()$. 
We call the skill-graph thus produced  a {\em degree-$k$ skill cluster}.
\end{definition}
Now we formalize a simple model of what the full text corpus looks like. More complicated extensions of this framework (e.g., considering a hierarchy among corpora) are left for future work. 
\begin{definition}(Text corpus) The text corpus consists of many skill clusters (e.g., math, newspapers, science, coding, etc.) $(S, T_1, E_1), (S, T_2, E_2),\ldots$  which share the same underlying set of skills $S$ but have disjoint sets of text-pieces $T_1, T_2, \ldots$ that are generated as in Definition~\ref{def:nature}.
\end{definition}


Definition~\ref{def:nature} allows us to define ``competence on a skill'' in the more familiar setting of statistical learning theory, specifically by letting us associate a statistical task with it. The task involves predicting answers to cloze questions in a sub-distribution of text pieces that contain that skill.  Our emergence theory will apply to the family of tasks of the next definition.

\begin{definition}[Competence on Skills] \label{defn:statsviewskill}
In the setting of  Definition~\ref{def:nature}, for each skill cluster and each skill $s \in S$ {\em statistical task  $\tau_{s}$ corresponding to $s$ and this cluster} is defined as follows. The  learner is given a text-piece  created by sampling $s_1, \ldots, s_{k-1}$  via iid sampling $(k-1)$ times from measure $\mu_1$, and applying  {\sc gen} and {\sc cloze} to the skill-tuple $(s, s_1, \ldots, s_{k-1})$ to convert it into a text piece $t$ with an associated measure $\mu_2(t)$ (but the measure is re-scaled so that the total measure of the inputs to this task $\tau_s$ is $1$). 
The {\em error rate} of the model at the statistical tasks is the  expected prediction loss  on text-pieces drawn from the above distribution.  Since error rate is between $0$ and $1$, the {\em competence} refers to $(1 - \text{error rate})$.

For every  $k'$-tuple of skills $(s_1, s_2,\ldots, s_{k'})$ (where $k' \leq k$)  the  statistical task 
$\tau_{s_1, s_2,\ldots, s_{k'}}$ corresponding to that $k$'-tuple is similarly defined. The inputs to the task are generated by completing the $k'$-tuple to a $k$-tuple $\vec{s}$ by iid sampling  of $k- k'$ additional skills from $\mu_1$  and then using {\sc gen} and {\sc cloze} to convert it into a text-piece.

Competence on the $k'$-tuple is defined just as above.
\end{definition}
\noindent{\bf Note:} The definition involves the $k$-tuple being picked by iid sampling from $\mu_1$ which, in principle, allows a skill to be picked twice. However, the probability of picking the same skill twice scales as $O(1/|S|)$. Since the set of  skills $S$ is assumed to be large, the distribution is almost the same as sampling distinct $k$-tuples of skills. The small difference of $O(1/|S|)$ between the two methods will not affect any of the random graph theory calculations. 

To illustrate with an example, if comprehending a text-piece involves $5$ skills, then that text-piece will appear in $5$ statistical tasks corresponding to individual skills, ${5 \choose 2}$ tasks corresponding to pairs of skills, and so on.  However, our method of measuring the loss incurred on these statistical tasks implicitly assumes that if the model incorrectly answered this cloze question (i.e., it assigned significant probability to the wrong answer), then that loss was incurred in {\em all} these statistical tasks. This accounting is conservative ---it ignores the possibility that a model could have perfect on skills $1$ to $4$ but still have incorrectly answered the cloze question because of, say, shaky   understanding of skill $5$. But this conservative accounting has the significant benefit of obviating the need for a mathematical formulation of what skills are, and what  it means to combine skills ---which is unformulated, as earlier noted. In summary, Definition~\ref{defn:statsviewskill}  can be thought of as a lower bound on the model's true ``competence''  individual skills. Note this notion of competence also does not capture out-of-distribution generalization (i.e. predict well when the distribution of text pieces changes).

\section{Analysis of Emergence (uniform cluster)}
\label{subsec:emergence}

Having set up a framework for modeling skills and (via Assumption~\ref{assum:proportionalloss}) connecting them to the cross-entropy loss of the model, we have  arrived at a core mathematical issue around emergence: {\em As the  model's excess cross entropy goes down (due to scaling), this improves the model's performance on cloze tasks inserted in the test stream. 
How does this improve  competence on the skills as well as on tuples of skills --in other words, performance on the associated  cloze questions?}

This section analyzes a simple setting where the test-stream consists of a single degree-$k$ skill cluster, and the skills are uniformly distributed and so are the text-pieces---in other words, the  distributions $\mu_1$ and $\mu_2$ in Definition~\ref{def:nature} are uniform. Section~\ref{subsec:measure} will extend the analysis to the general setting.  
The calculations below only require the total number of skills to be much less than the support size of the distribution of text---in other words, the set of skills can be extremely large.

\paragraph{Key Hurdle:}  We point out the naive but incorrect way to reason about this.  Since each text piece is connected to a random $k$-tuple  of skills, say $\vec{s}$, one is tempted to reason about emergence via linearity of expectations, specifically, the following relation about prediction loss, where ``expectation'' is just average over text-pieces/skills with respect to their measure: 
\begin{equation}\label{eqn:incorrectreln}
    k \cdot E_t[\text{loss}(t)] = E_s[\text{failure rate of statistical task}~\tau_{s}].~~~ (\text{\bf Incorrect!})
\end{equation}
To see that this is incorrect,  let $Y$ be the subset of such text pieces where the model makes mistakes on cloze questions. This $Y$ depends upon the skill graph, and the unknown processes {\sc gen} and {\sc cloze} of Definition~\ref{def:nature}, which assign measure to text pieces in an unknown way that may introduce arbitrary correlations. Since the model ``saw'' part of the test stream (namely, the portion corresponding to training data) it has picked some  information about the skill cluster. Thus at the end of training, locations of errors in the test stream  --i.e., the set $Y$--- depend upon the skill-cluster, and since we lack understanding of $Y$ the analysis has to treat it as arbitrary. In other words, our analysis is allowed to assume an upper bound on the test loss, but the text-pieces on which this loss occurs form an arbitrary subset that depends upon the graph structure. In particular, (\ref{eqn:incorrectreln}) cannot be inferred. This is the key mathematical hurdle and our proof will surmount it using  random graph theory.

Let's say the model {\em makes a  mistake} on a text-piece if the total prediction loss on all the  cloze-questions of that text-piece is at least $1/2$ (which is the kind of error incurred if the incorrect answer is chosen with noticeable probability  on even a  single cloze question). If the average cross-entropy loss for the text-pieces is $\delta$ we conclude $Y$ consists of at most $2\delta$ fraction of text pieces.  The following result guarantees that statistical tasks corresponding to most skills do not assign significant probability to text pieces in $Y$ --in other words, the model has good performance on statistical tasks connected with these skills. The theorem follows from (and is a simple rephrasing of) Lemma~\ref{lem:mixing} in the appendix. 
\begin{theorem}[Basic]\label{corr:emerge1}
 Let $\alpha, \beta, \theta >0, \beta >1, \alpha \beta <1, \theta <1$ satisfy
\begin{equation} \label{eqn:mix2a}
    H(\theta) + k\theta \left( H(\beta \alpha)  - \beta \alpha  \log \frac{1}{\alpha} - (1- \beta \alpha)  \log (\frac{1}{1-\alpha})\right)<0
\end{equation} and  the distribution on skills and text pieces be uniform in the skill-cluster. Then irrespective of the details of {\sc gen} and {\sc cloze} processes, the following property holds for every subset $Y$ of text pieces  that contains at least $\theta$ fraction of text pieces:  at least $1-\alpha$ fraction of skills have at most $\beta \theta  k N_1/N_2$ edges to $Y$ (in other words, at  most $\beta$ times the number of edges a skill would be {\em expected} to have to text-pieces in $Y$).
 \end{theorem}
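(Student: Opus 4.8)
The statement is a union-bound / first-moment argument over the random skill-graph, so the plan is to fix the adversarial set $Y$ of text-pieces (with $|Y| \ge \theta N_2$) and bound the probability that ``too many'' skills are over-connected to it. Concretely, call a skill $s$ \emph{bad for $Y$} if it has more than $\beta\theta k N_1/N_2$ edges into $Y$. Since each text-piece independently picks its $k$ skill-endpoints i.i.d.\ uniformly from the $N_1$ skills, for a \emph{fixed} skill $s$ the number of edges from $s$ into $Y$ is a sum of $k|Y|$ i.i.d.\ indicators each of probability $1/N_1$, hence has mean $\mu := k|Y|/N_1 \ge \theta k N_2/N_1 =: \theta k N_1/N_2$ after rescaling — and a Chernoff bound gives $\Pr[\text{$s$ bad}] \le \exp(-\mu \cdot (\beta\ln\beta - \beta + 1))$ (the standard upper-tail rate, which is what produces the $H(\beta\alpha) - \beta\alpha\log(1/\alpha) - (1-\beta\alpha)\log(1/(1-\alpha))$ combination once one reparametrizes). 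The event ``at least $\alpha N_1$ skills are bad'' then has probability at most $\binom{N_1}{\alpha N_1}\bigl(\Pr[\text{$s$ bad}]\bigr)^{\alpha N_1}$ by a union bound over which $\alpha N_1$ skills are bad (here one must be slightly careful that badness of distinct skills is \emph{not} independent, but the events are negatively associated, or one simply bounds the probability that a \emph{particular} $\alpha N_1$-subset is all-bad by the product of marginals after conditioning is handled via a standard FKG/negative-correlation observation, or by the cruder bound that any $\alpha N_1$ skills together receive at least $\alpha N_1 \beta \mu$ edges into $Y$, again a sum of independent indicators).

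**Taking the union over $Y$.** The set $Y$ is adversarial and graph-dependent, so the second union bound is over all $\binom{N_2}{\theta N_2}$ choices of a $\theta N_2$-subset $Y$ (it suffices to handle $|Y| = \theta N_2$ exactly, and monotonicity handles larger $Y$, since more text-pieces only makes the conclusion about any fixed skill easier to state in the rescaled form). Using $\binom{n}{\lambda n} \le \exp(n H(\lambda))$ (natural-log entropy), the total failure probability is at most
\begin{equation}
\exp\!\Bigl(N_2 H(\theta) + N_1 H(\alpha) + \alpha N_1 \cdot \log\Pr[\text{$s$ bad}]\Bigr),
\end{equation}
and plugging in the Chernoff rate with $\mu = \theta k N_1 / N_2 \cdot (N_2/N_1)$... — the bookkeeping here is exactly where the hypothesis \eqref{eqn:mix2a} comes from: after dividing through by the appropriate scale ($N_1$ or $N_2$, using that $N_2 \gg N_1$ so the $N_1 H(\alpha)$ term is dominated or absorbed — this is the role of ``$|S|$ much smaller than the number of text-pieces''), the exponent is $N_2$ times $\bigl(H(\theta) + k\theta(H(\beta\alpha) - \beta\alpha\log(1/\alpha) - (1-\beta\alpha)\log(1/(1-\alpha)))\bigr) + o(N_2)$, which is negative by assumption. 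Hence the failure probability is $e^{-\Omega(N_2)} \to 0$, giving the ``probability almost $1$'' conclusion; and since the conclusion then holds simultaneously for all valid $Y$, in particular it holds for the (graph-dependent, adversarial) error-set $Y$ of the model.

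**Main obstacle.** The genuine subtlety — and the step I would spend the most care on — is the two-layer dependence: the edges are generated by sampling \emph{per text-piece} (each text-piece shoots out $k$ i.i.d.\ darts at skills), so ``number of edges from skill $s$ into $Y$'' and ``number of edges from skill $s'$ into $Y$'' are negatively correlated but not independent, and more importantly the \emph{adversary picks $Y$ after seeing the graph}. The clean way around the latter is precisely the union bound over all $\binom{N_2}{\theta N_2}$ candidate $Y$'s taken \emph{before} conditioning on the graph — this is why the $H(\theta)$ term must appear and why we need $N_2$ (not merely the smaller $N_1$) worth of exponential decay to kill it. For the former, I would reduce ``$\alpha N_1$ skills are each bad'' to ``the $\alpha N_1 \beta k \theta N_2 / N_2 \cdot$(appropriate count) edges landing on those skills from $Y$'' being a single sum of independent Bernoulli$(1/N_1)$ variables (one per (text-piece-in-$Y$, dart) pair), apply one Chernoff bound, and then union over which $\alpha N_1$ skills — this avoids ever needing joint independence of per-skill counts. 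I'd also double-check the direction of all the $\beta>1$, $\alpha\beta<1$, $\theta<1$ constraints against the sign of each term in \eqref{eqn:mix2a}, since the entropy-function identities ($H(\beta\alpha) - \beta\alpha\log(1/\alpha) - (1-\beta\alpha)\log(1/(1-\alpha)) = -D(\beta\alpha \,\|\, \alpha)$, the KL-divergence of two Bernoullis) make the whole hypothesis transparently ``$H(\theta) < k\theta \, D(\beta\alpha\|\alpha)$,'' which is the cleanest form to verify and to present.
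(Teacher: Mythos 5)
Your proposal follows essentially the same route as the paper's Lemma~\ref{lem:mixing}: a first-moment/union-bound argument over all candidate $\theta$-fraction error sets $Y$ of text pieces and over all $\alpha$-fraction subsets $Z$ of skills, combined with a single binomial tail bound (via Stirling) on the \emph{aggregate} number of edges from $Y$ to $Z$, with the condition \eqref{eqn:mix2a} arising after dividing the log-expectation by $N_1$ and using $N_2 \ll N_1$ to discard the $H(\alpha)$ term. You correctly identify that the per-skill Chernoff counts are not independent and that the fix is to bound the total edge count between $Y$ and $Z$ as one sum of i.i.d.\ darts, which is exactly the paper's ``bad pair $(Y,Z)$'' counting; the only remaining blemishes are notational (your $N_1,N_2$ are swapped relative to the paper, and the initial Poisson-style rate $\beta\ln\beta-\beta+1$ should be the binomial rate $D(\beta\alpha\,\|\,\alpha)$, as you yourself note at the end).
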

 Note that as the model is scaled up, $\theta$ will go down and the set $Y$ containing erroneous answers on cloze questions will shrink.  Our analysis kicks in only once  $\theta$ drops below $1$. In terms of the emergence phenomenon, this corresponds to first signs of improvement on downstream tasks once the model's loss drops below some threshold.
 
  Since edges between a skill node $s$ and set $Y$ correspond to errors in the statistical task $\tau_s$,  Theorem~\ref{corr:emerge1} is giving an upper bound on the prediction error in statistical tasks corresponding to $(1-\alpha)$ fraction of skills. 

 \begin{definition}[performance curve]
The contour plot (i.e., the boundary) of the region of $\alpha, \beta$ combinations satisfying Theorem~\ref{corr:emerge1} is called a {\em performance curve} and denoted $C_{(k,\theta)}$. A performance curve $C$ is {\em better} than another curve $C'$ if for every $\alpha, \beta$ on $C$ there is a corresponding point $(\alpha, \beta')$ on $C'$ for $\beta' >\beta$.
 \end{definition}

 Figure~\ref{fig:sub1} gives {\em performance curves}, i.e., the contour plot of the set of
$\alpha, \beta$ combinations satisfying Theorem~\ref{corr:emerge1} for a given $\theta, k$.     The horizontal axis plots $(1-\alpha)$ and the vertical axis plots $\beta \theta$, so  point $(0.8, 0.16)$ on a curve    means at least $0.8$ fraction of skills have at most $0.16$ fraction of their edges in the ``error set'' $Y$ (hence $0.84$ fraction of their edges are outside the error set). The emergence curves shift down noticeably (i.e., imply emergence of more skills) as we increase $k$. The next lemma shows this trend always holds; follows from the fact that 
$H(\theta)/\theta$ is a decreasing function in the interval $(0, 1)$. 

\begin{lemma}[Monotonicity] If $\theta' < \theta$ then the performance curve for $\theta', k$ lies below that for $\theta, k$. 

If $k' > k$ then  the performance curve of $\theta, k'$ lies  below that for $k, \theta$.
\end{lemma}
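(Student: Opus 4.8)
The plan is to rewrite the curve's defining inequality in a transparent form and then read off both monotonicities. Expanding the binary entropy $H(x)=-x\log x-(1-x)\log(1-x)$ in~(\ref{eqn:mix2a}) and collecting terms, the bracketed expression is exactly $-D(\beta\alpha\,\|\,\alpha)$, where $D(x\,\|\,y)=x\log\frac{x}{y}+(1-x)\log\frac{1-x}{1-y}$ is the KL divergence between $\mathrm{Bernoulli}(x)$ and $\mathrm{Bernoulli}(y)$. Hence~(\ref{eqn:mix2a}) is equivalent to
\[
D(\beta\alpha\,\|\,\alpha)\;>\;\frac{H(\theta)}{k\theta}.
\]
For fixed $\alpha\in(0,1)$ the map $\beta\mapsto D(\beta\alpha\,\|\,\alpha)$ is continuous and strictly increasing on $[1,1/\alpha)$, from $0$ to $+\infty$, so there is a unique boundary value $\beta_{k,\theta}(\alpha)\in(1,1/\alpha)$ with $D(\beta_{k,\theta}(\alpha)\,\alpha\,\|\,\alpha)=H(\theta)/(k\theta)$ and the feasible region is $\{\beta>\beta_{k,\theta}(\alpha)\}$. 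In the plotted coordinates of Figure~\ref{fig:sub1} the curve $C_{(k,\theta)}$ is the graph $\alpha\mapsto\theta\,\beta_{k,\theta}(\alpha)$, and ``$C$ lies below $C'$'' means this quantity is pointwise smaller for $C$.

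The statement about $k$ is then immediate: for $k'>k$ the right-hand side $H(\theta)/(k\theta)$ strictly decreases, so by monotonicity of $\beta\mapsto D(\beta\alpha\,\|\,\alpha)$ we get $\beta_{k',\theta}(\alpha)<\beta_{k,\theta}(\alpha)$ for every $\alpha$; multiplying by the fixed $\theta$ puts $C_{(k',\theta)}$ below $C_{(k,\theta)}$. (This uses only $H(\theta)/\theta>0$, and it matches the remark that the curves shift down as $k$ grows.)

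For the statement about $\theta$, fix $k,\alpha$ and set $p(\theta):=\alpha\,\beta_{k,\theta}(\alpha)\in(\alpha,1)$, so the figure-coordinate of the curve equals $\theta p(\theta)/\alpha$ and $p(\theta)$ solves $k\theta\,D(p\,\|\,\alpha)=H(\theta)$; it suffices that $\theta\mapsto\theta p(\theta)$ be increasing. Implicit differentiation together with $k\,D(p\,\|\,\alpha)=H(\theta)/\theta$ gives
\[
p'(\theta)=\frac{\theta H'(\theta)-H(\theta)}{k\theta^{2}\,\partial_{p}D(p\,\|\,\alpha)}=\frac{\log(1-\theta)}{k\theta^{2}\,\partial_{p}D(p\,\|\,\alpha)},
\]
using the elementary identity $\theta H'(\theta)-H(\theta)=\log(1-\theta)$ --- i.e.\ $\bigl(H(\theta)/\theta\bigr)'=\log(1-\theta)/\theta^{2}<0$, which is precisely the quoted fact that $H(\theta)/\theta$ is decreasing; since $\partial_{p}D(p\,\|\,\alpha)=\log\frac{p(1-\alpha)}{(1-p)\alpha}>0$ for $p>\alpha$ this also recovers $p'(\theta)<0$. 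Combining with the one-line identity $p\,\partial_{p}D(p\,\|\,\alpha)-D(p\,\|\,\alpha)=\log\frac{1-\alpha}{1-p}$ and with $H(\theta)+\log(1-\theta)=\theta H'(\theta)$, one finds that $\frac{d}{d\theta}\bigl(\theta p(\theta)\bigr)=p(\theta)+\theta p'(\theta)$ has the same sign as
\[
H'(\theta)+k\log\frac{1-\alpha}{1-p(\theta)}.
\]
The second term is strictly positive because $p(\theta)>\alpha$, and $H'(\theta)=\log\frac{1-\theta}{\theta}\ge 0$ for $\theta\le\tfrac12$; so $\theta p(\theta)$ is strictly increasing on $(0,\tfrac12]$ and, as one checks, throughout the range of $\theta$ the performance curve is meant to describe (small excess entropy, where the bound $\beta\theta$ is non-vacuous and toward which scaling drives the model). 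Hence $\theta'<\theta$ gives $\theta'\beta_{k,\theta'}(\alpha)<\theta\beta_{k,\theta}(\alpha)$ for every $\alpha$, i.e.\ $C_{(k,\theta')}$ lies below $C_{(k,\theta)}$.

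The hard part is this last sign check for $\theta>\tfrac12$, where $H'(\theta)<0$ and one must argue that $p(\theta)$ cannot be too close to $\alpha$ --- equivalently that $\bigl(\frac{1-\alpha}{1-p(\theta)}\bigr)^{k}>\frac{\theta}{1-\theta}$ --- which is where one genuinely needs how $p(\theta)$ is pinned down by $k\theta D(p\,\|\,\alpha)=H(\theta)$, not just the qualitative monotonicity of $H(\theta)/\theta$. A second point worth flagging is that ``lies below'' here refers to the plotted coordinates $(1-\alpha,\beta\theta)$: in the raw $(\alpha,\beta)$-plane, decreasing $\theta$ actually pushes $\beta_{k,\theta}(\alpha)$ \emph{upward}, and it is only the product $\theta\beta_{k,\theta}(\alpha)$, the meaningful ``fraction of a skill's edges inside the error set,'' that decreases.
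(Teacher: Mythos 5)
Your proof is correct, and it starts from the same elementary fact the paper cites --- that $H(\theta)/\theta$ is decreasing, equivalently $\theta H'(\theta)-H(\theta)=\log(1-\theta)<0$ --- but actually carries out the argument. The paper's proof is the single remark that the lemma ``follows from'' this fact, which handles the $k$-monotonicity cleanly (your KL reformulation $D(\beta\alpha\,\|\,\alpha)>H(\theta)/(k\theta)$ makes that a triviality) but is genuinely insufficient for the $\theta$-monotonicity. The subtlety you isolate is exactly right: decreasing $\theta$ pushes the raw boundary $\beta_{k,\theta}(\alpha)$ \emph{up}, and it is only the product $\theta\beta_{k,\theta}(\alpha)$ in the plotted coordinates that should decrease, so the cited fact by itself does not give the stated conclusion and one needs the implicit-differentiation sign check you perform. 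Your flagged concern for $\theta>\tfrac12$ is also genuine and not merely a loose end: near $\theta\to 1^-$ one has $p(\theta)\to\alpha^+$ so $\log\tfrac{1-\alpha}{1-p(\theta)}\to 0$ while $H'(\theta)\to-\infty$, hence the sign in your criterion eventually turns negative and $\theta\beta_{k,\theta}(\alpha)$ approaches $1$ \emph{from above}; so the lemma as written actually fails for $\theta$ near $1$. That is precisely the regime where $\beta\theta>1$ and the bound in Theorem~\ref{corr:emerge1} is vacuous, so the lemma needs a hypothesis such as $\theta\le\tfrac12$ (or $\beta\theta<1$) that the paper does not state. Your write-up is more careful than the paper's and usefully surfaces this defect in the statement; the only thing left undone is to either restrict the range of $\theta$ or complete the sign check on $(\tfrac12,\theta_{\max})$ for whatever range of $\theta$ the curve is meant to cover.
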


\begin{figure}[h]
  \centering
  \begin{minipage}{0.45\linewidth}
    \centering
    \includegraphics[height=1.5in]{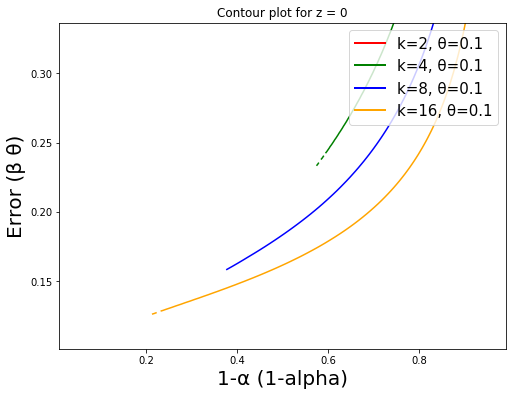}
    \label{fig:sub1}
  \end{minipage}
  \hfill
  \begin{minipage}{0.45\linewidth}
    \centering
    \includegraphics[height=1.5in]{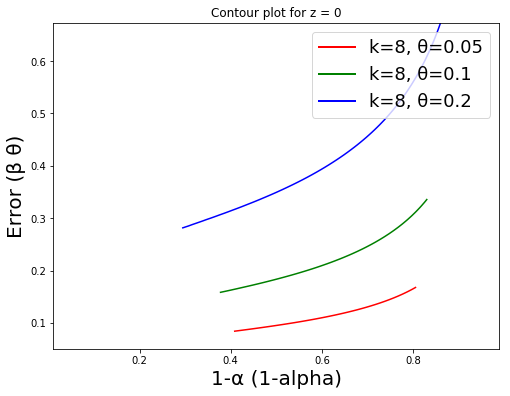}
    \label{fig:sub2}
  \end{minipage}
 \caption{Performance Curves: Left plot  has $\theta =0.1$ and varies $k=2, 4, 8, 16$.  Higher values of $k$ greatly improve performance (for $k=2$ valid $\alpha, \beta$ did not exist). 
The right plot has $k=8$ and  $\theta=0.05, 0.1, 0.2$.   Section~\ref{subsec:ktuples} clarifies that it also describes the model's performance curve for $t$-tuples of skills for for $\theta =0.05$ and $t=1, 2, 4$ respectively (e.g., blue curve for $4$-tuples).}
  \label{fig:mixing_lemma}
  \vspace{-1mm}
\end{figure}

\subsection{The tensorization argument}
While the above method yields performance curves,  better curves can be derived via a tensorization argument. 
Consider the following {\em $k'$-wise recombination} operation on the test stream. First randomly partition the test stream into subsets of size $k'$, and then concatenate  the $k'$ text pieces within each subset to create a larger piece of text that we refer to as a ``$k'$-piece,'' and whose measure  is the sum of the measures of the component test-pieces. All cloze questions for the old test-pieces are retained and no new cloze questions are inserted. Clearly, if the error of the model per average text-piece was $\delta$, then the  error per average $b$-piece is $k'\delta$. 
However, each $k'$-piece is now using  a random $k'k$-tuple of skills. Importantly, this set of $k'k$ skills consists of iid draws from the skill distribution. In other words, Theorem~\ref{corr:emerge1} now becomes the following.

\begin{corollary}[tensorization] \label{corr:emerge2} In the same setting as Theorem~\ref{corr:emerge1}, for integer $k' \in [2, 1/\theta]$ the conclusion of that theorem holds also for $\alpha, \beta$ pairs satisfying
\begin{equation} \label{eqn: tensor1}
 H(k'\theta) + kk'\theta \left( H(\beta \alpha)  - \beta \alpha  \log \frac{1}{\alpha} - (1- \beta \alpha)  \log (\frac{1}{1-\alpha})\right) < 0
\end{equation}   
Furthermore, if $H(k'\theta) < k'H(\theta) $ the emergence curve from this expression dominates that derived from Theorem~\ref{corr:emerge1}.
\end{corollary}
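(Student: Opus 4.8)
The plan is to prove \eqref{eqn: tensor1} by re-running the counting argument behind Theorem~\ref{corr:emerge1} (i.e.\ Lemma~\ref{lem:mixing}) on the $k'$-wise recombined test stream. The point is that $k'$-wise recombination does not change which cloze answers the model gets wrong; it only reorganizes \emph{where} the errors sit, and that reorganization makes the union bound over possible error sets --- the origin of the $H(\theta)$ term in \eqref{eqn:mix2a} --- cheaper. Encoding that cheaper bound is exactly what \eqref{eqn: tensor1} records, and the ``furthermore'' clause is then a one-line comparison of two thresholds.

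First I would fix the objects. Partition the $N_1$ text pieces uniformly into $N_1/k'$ blocks of size $k'$ and glue each block into a ``$k'$-piece'' carrying the summed measure and all the original cloze questions; with $\mu_1,\mu_2$ uniform this is again a uniform skill cluster, now of degree $kk'$ over $N_1/k'$ text-piece-nodes and the same $N_2\ll N_1$ skills (the $kk'$ skill-draws of a $k'$-piece collide with probability $O(1/N_2)$ and, as already noted after Definition~\ref{defn:statsviewskill}, are harmless for the random-graph estimates). Fix any error set $Y\subseteq T$ with $|Y|\ge\theta N_1$. Since each text piece lies in exactly one block, the ``dirty blocks'' --- those meeting $Y$ --- number at most $|Y|$, hence at most a $\theta N_1/(N_1/k')=k'\theta$ fraction of all blocks; the restriction $k'\in[2,1/\theta]$ is precisely what keeps $k'\theta\le 1$, so that $H(k'\theta)$ is meaningful.

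Next I would re-run the entropy/Chernoff estimate against this reorganized picture. The union bound now ranges over: (i) the set of dirty blocks, contributing $\binom{N_1/k'}{k'\theta\cdot N_1/k'}=2^{(N_1/k')H(k'\theta)}$ in place of $2^{N_1 H(\theta)}$; (ii) within each dirty block, which of its $k'$ pieces belong to $Y$ --- a secondary factor; (iii) the ``bad skill'' subset of size $\alpha N_2$, contributing $2^{N_2 H(\alpha)}$, which is negligible because $N_2\ll N_1$. Against any fixed such configuration the at most $\theta N_1$ text pieces of $Y$ carry at most $k\theta N_1$ i.i.d.\ skill slots, and the probability that $\alpha N_2$ prescribed skills each grab more than $\beta$ times their expected share of those slots is, by the same balls-into-bins Chernoff bound used in Theorem~\ref{corr:emerge1}, at most $2^{k\theta N_1\,B(\alpha,\beta)}$ where $B(\alpha,\beta)=H(\beta\alpha)-\beta\alpha\log\tfrac1\alpha-(1-\beta\alpha)\log\tfrac1{1-\alpha}$; note $B(\alpha,\beta)=-D_{\mathrm{KL}}\!\big(\mathrm{Ber}(\beta\alpha)\,\|\,\mathrm{Ber}(\alpha)\big)<0$ for $\beta>1$, and $\alpha\beta<1$ is precisely what keeps this probability below $1$. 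Requiring the product of (i)--(iii) and this probability to be $<1$ and dividing the exponent by $N_1/k'$ leaves $H(k'\theta)+kk'\theta\,B(\alpha,\beta)<0$, i.e.\ \eqref{eqn: tensor1}, once the secondary factor (ii) is absorbed; moreover every skill surviving the argument has at most $\beta\theta k N_1/N_2$ edges to the genuine $Y$, so the conclusion is literally that of Theorem~\ref{corr:emerge1}.

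Finally, for the ``furthermore'' clause: Theorem~\ref{corr:emerge1} is the constraint $B(\alpha,\beta)<-H(\theta)/(k\theta)$ and \eqref{eqn: tensor1} is $B(\alpha,\beta)<-H(k'\theta)/(kk'\theta)$, with $B<0$ throughout. Since $H(x)/x$ is decreasing on $(0,1)$ --- the fact already invoked for the Monotonicity Lemma --- we have $H(k'\theta)/(k'\theta)<H(\theta)/\theta$, i.e.\ $H(k'\theta)<k'H(\theta)$ (so the stated hypothesis holds automatically once $k'\ge 2$), hence $-H(k'\theta)/(kk'\theta)>-H(\theta)/(k\theta)$; the feasible $(\alpha,\beta)$-region for \eqref{eqn: tensor1} therefore strictly contains that of Theorem~\ref{corr:emerge1}, so its performance curve lies below it, i.e.\ dominates. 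The step I expect to be the real work is the third paragraph's bookkeeping: verifying that the within-dirty-block factor (ii) is genuinely lower-order across the whole range $k'\in[2,1/\theta]$, and --- as is already needed for Theorem~\ref{corr:emerge1} itself --- that one may legitimately union-bound over error configurations even though the set $Y$ is produced by the trained model and hence correlated with the random skill graph.
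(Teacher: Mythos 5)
Your recombination setup matches the paper's, but the way you re-run the counting argument diverges from the paper and contains a genuine gap. The paper's (very terse) proof applies Theorem~\ref{corr:emerge1} wholesale to the recombined graph: $N_1/k'$ block-nodes, degree $kk'$, bad-block fraction $k'\theta$, and the union bound then ranges over sets of \emph{blocks} while the Chernoff event concerns edges from skills to \emph{bad blocks} $Y'$. You instead keep the original error set $Y$ and its $k\theta N_1$ edges, union-bound only over dirty-block sets (your factor (i)), and declare the within-block placement factor (ii) ``secondary.'' That last step is where the argument breaks. The map $Y \mapsto (\text{set of dirty blocks})$ is a surjection onto dirty-block sets with fibers counted by (ii), so $(\mathrm{i})\times(\mathrm{ii}) \ge \binom{N_1}{\theta N_1} = 2^{N_1 H(\theta)(1+o(1))}$. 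In particular (ii) contributes at least $2^{\,N_1 H(\theta) - (N_1/k') H(k'\theta)}$, which is $2^{\Theta(N_1)}$ precisely when $H(k'\theta) < k' H(\theta)$ — i.e.\ exactly in the regime where the corollary claims an improvement. (A concrete check: $k'=2,\ \theta=0.1$ gives $(1/k')H(k'\theta)\approx 0.361 < H(\theta)\approx 0.469$, and the deficit $\approx 0.108\, N_1$ bits must be paid by (ii).) So dropping (ii) is not absorbing a lower-order term; it invalidates the union bound, and reinstating (ii) just reproduces the original constraint $H(\theta)+k\theta(\cdots)<0$. The extra slack the paper obtains comes precisely from passing to the coarser object — counting a skill's edges to bad \emph{blocks} rather than to $Y$ itself — which is a genuinely weaker conclusion; your attempt to keep the literal ``at most $\beta\theta k N_1/N_2$ edges to the original $Y$'' conclusion is what forces (ii) back in and collapses the savings.

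A secondary remark that may explain why you tried to hit $kk'\theta$ rather than what the paper's own substitution actually produces: carrying out ``$N_1\to N_1/k'$, $\theta\to k'\theta$, $k\to kk'$'' in \eqref{eqn:mix2a} and normalizing by $N_1/k'$ gives $H(k'\theta) + k(k')^2\theta\,(\cdots) < 0$, not \eqref{eqn: tensor1}; the ``furthermore'' clause is internally consistent with the printed $kk'\theta$, so the two are in tension and the paper's coefficient appears to drop a factor of $k'$. Either way, the argument you propose does not establish \eqref{eqn: tensor1}.
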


\subsubsection{Emergence for $k'$-tuples of skills}
\label{subsec:ktuples}
Now we estimate the model's emergence curve for statistical tasks corresponding to $k'$-tuples for $k'\leq k$.
The basic idea is to consider $k'$-tuples of skills as `composite-skills,' and then re-do the calculation.

\noindent{\bf 2nd estimate (better):} Consider the following {\em $k'$-wise recombination} operation on the test stream. First randomly partition the test stream into subsets of size $k'$, and then concatenate  the $k'$ text pieces within each subset to create a larger piece of text that we refer to as a ``$k'$-piece.''  All cloze questions for the old test-pieces are retained and no new cloze questions are inserted. Clearly, if the error of the model per average text-piece was $\delta$, then the  error per average $b$-piece is $k'\delta$. 
However, each $k'$-piece is now using  a random $k'k$-tuple of skills,   which we can alternatively view as $k$  random $k'$-tuples. Thus viewing $k'$-tuples of skills as `composite skills' we can use this as the skill set in  the setting of Theorem~\ref{corr:emerge1}, which gives us an easy corollary quantifying the   performance on tasks corresponding to $k'$-tuples of skills. 

\begin{lemma}[Emergence for $k'$-tuples of skills] \label{corr:emerge2}
     Consider the skill-graph $(S', T', E)$ where $S'$ consists of all $k'$-tuples of skills, $T'$ consists of $k'$-pieces, and $E$ consists of $(s', t')$ where $s'$ is a $k'$-tuple of skills and $t'$ is a $k'$-piece where this tuple of skills is used. Let $Y$ consist of $\theta$ fraction of $k'$-pieces. Then for any $\alpha, \beta >0, \beta >1, \alpha \beta <1$ satisfying (\ref{eqn:mix1}) there are at least $1-\alpha$ fraction of $k'$-tuples of skills that have at most $\alpha \beta \theta \theta N_1$ $\beta \theta$ fraction of their edges connected to $Y$.
\end{lemma}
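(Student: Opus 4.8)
The plan is to deduce the lemma by applying Theorem~\ref{corr:emerge1} essentially verbatim to a new \emph{composite} skill cluster whose ``skills'' are the $k'$-tuples of skills and whose ``text-pieces'' are the $k'$-pieces. No new random-graph estimate is needed: the work lies entirely in checking that this composite object is again a uniform degree-$k$ skill cluster of the kind covered by Theorem~\ref{corr:emerge1}.

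Concretely, I would first realize the composite cluster. Fix any partition of the test stream into groups of $k'$ text-pieces and concatenate each group into a $k'$-piece, assigning it the sum of the component measures and keeping every original cloze question; since the component text-pieces were produced independently by \textsc{gen}/\textsc{cloze}, a fixed partition behaves like a random one. A $k'$-piece $t'$ coming from $t_1,\dots,t_{k'}$ carries the $kk'$ skills of its parts, and if we list the $k$ skills of each $t_j$ as $s_{j,1},\dots,s_{j,k}$ in the order they were drawn iid from $\mu_1$, then these $kk'$ skills are $kk'$ iid draws from $\mu_1$. Regroup them into $k$ blocks, the $i$-th being the $k'$-tuple $\sigma_i=(s_{1,i},\dots,s_{k',i})$; the blocks use disjoint sets of draws, so $\sigma_1,\dots,\sigma_k$ are mutually independent and each is distributed as $\mu_1^{\otimes k'}$. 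Taking the edge set to be the pairs $(\sigma_i,t')$ makes $(S',T',E)$ a genuine degree-$k$ skill cluster over the skill set $S'$ of all $k'$-tuples, with skill measure $\mu_1^{\otimes k'}$ and $k'$-piece measure the summed component measures; when $\mu_1$ and $\mu_2$ are uniform so are both induced measures. Now given any $Y$ made of a $\theta$ fraction of $k'$-pieces (for instance the $k'$-pieces on which the model ``makes a mistake,'' whose fraction is at most $k'$ times the corresponding original fraction, since $k'$-pieces inherit additive measure and loss), Theorem~\ref{corr:emerge1} applied to $(S',T',E)$ with this $\theta$, the \emph{same} $k$, and any $\alpha,\beta>0$ with $\beta>1$, $\alpha\beta<1$ satisfying that theorem's inequality gives: at least a $1-\alpha$ fraction of $k'$-tuples send at most $\beta$ times the expected number of edges into $Y$, i.e.\ at most a $\beta\theta$ fraction of their edges --- exactly the asserted conclusion.

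The step I expect to be the main obstacle is verifying that the composite cluster really meets the hypotheses of Theorem~\ref{corr:emerge1}, on two fronts. (i) The regrouping must deliver $k$ composite skills that are mutually \emph{independent} and each $\mu_1^{\otimes k'}$-distributed, which is exactly why one needs the ``$i$-th draw of each component'' bookkeeping and why it is cleanest to take $S'$ to be \emph{ordered} $k'$-tuples; the $O(1/|S|)$ discrepancy between ordered and unordered tuples (and between iid and distinct sampling) is absorbed just as in the Note following Definition~\ref{defn:statsviewskill} and does not affect the random-graph estimates. (ii) Theorem~\ref{corr:emerge1} holds ``for sufficiently large'' size because its proof union-bounds over the choices of $Y$ on the degree-$k$ side and over the skill nodes; in the composite cluster the degree-$k$ side has only $\approx N_2/k'$ nodes ($k'$-pieces) while there are $\approx N_1^{k'}$ composite-skill nodes (writing $N_1,N_2$ for the original skill and text-piece counts), so one must recheck that the strict negativity of the inequality still beats this enlarged union bound --- which is fine provided the population of text-pieces dominates the number $N_1^{k'}$ of composite skills, a requirement consistent with the paper's ``poverty of the stimulus'' picture (the $k'$-tuples far outnumber the training corpus yet remain far fewer than the enormous space of text-pieces). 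Everything else in the statement is a direct transcription of Theorem~\ref{corr:emerge1}.
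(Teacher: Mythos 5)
Your proposal is correct and takes essentially the same route as the paper: the paper's own argument (sketched in the ``2nd estimate'' paragraph just above the lemma) is exactly the $k'$-wise recombination of text-pieces followed by regrouping the resulting $kk'$ iid skills into $k$ iid $k'$-tuples treated as composite skills, then invoking Theorem~\ref{corr:emerge1}. Your bookkeeping via the $\sigma_i$ blocks makes precise what the paper states informally, and your observation (ii) about the union bound now ranging over $\approx|S|^{k'}$ composite skills --- requiring the text-piece population to dwarf $|S|^{k'}$, not merely $|S|$ --- is a legitimate implicit hypothesis the paper does not spell out but which is consistent with its standing assumption that the universe of text-pieces is vastly larger than the skill set.
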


The next corollary presents a somewhat surprising general principle that's also hinted at in caption of Figure~\ref{fig:mixing_lemma}. Assume (for simplicity) a Chinchilla-like scaling law that $10$x up-scaling leads to  factor $2$ reduction in excess entropy. If a model is considered to have reasonable performance on individual skills at current scaling, then after further up-scaling of $10x$ one would see similar reasonable performance on skill-pairs, and scaling up by yet another $10$x after that will yield similar reasonable performance on $4$-tuples of skills, etc. 
Note that these are {\em provable lower bounds} on performance gains---actual gains could  be higher. 
Figure~\ref{fig:mixing_lemma} illustrates the phenomenon.

\begin{corollary} \label{corr:emerge3} When the model $M_1$ with loss $\delta$ is scaled up (e.g., as per equation~(\ref{eqn:scaling}))  so that the new model $M_2$ has loss  $\delta/k'$,  then the performance curve inferred by our method for $k'$-tuples of skills using $M_2$ is identical to the curve inferred for individual skills on model $M_1$.
\end{corollary}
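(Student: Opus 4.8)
The plan is to show that the method assigns to both situations a performance curve of the form $C_{(k,\theta)}$ (the contour of inequality~(\ref{eqn:mix2a})) with the \emph{same} parameters, namely degree $k$ and error fraction $\theta = 2\delta$; two contours of the same inequality with the same constants are the same set, which is exactly the asserted identity.

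First I would pin down what the method reports for individual skills on $M_1$. By Assumption~\ref{assum:proportionalloss} the average cloze prediction loss of $M_1$ per text-piece is, up to the $\approx 1.1$ slack (which I absorb into $\delta$), equal to $\delta$; calling a text-piece a \emph{mistake} when its total cloze loss is at least $1/2$, Markov's inequality shows the error set $Y_1 \subseteq T$ satisfies $\mu_2(Y_1) \le 2\delta$. Theorem~\ref{corr:emerge1}, applied to the degree-$k$ uniform skill cluster with $\theta = 2\delta$, then produces the performance curve $C_{(k, 2\delta)}$: the admissible $(\alpha,\beta)$ are exactly the solutions of~(\ref{eqn:mix2a}) with parameters $k$ and $2\delta$.

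Next I would run the $k'$-wise recombination of Section~\ref{subsec:emergence}, following the construction in Lemma~\ref{corr:emerge2}, on the test stream of $M_2$. Because the measure of a $k'$-piece is the sum of its $k'$ constituent measures and no cloze questions are added, $M_2$'s average prediction loss \emph{per $k'$-piece} is $k' \cdot (\delta/k') = \delta$, so the set $Y_2'$ of mistaken $k'$-pieces again has measure at most $2\delta$. I would then view the recombined stream, exactly as in Lemma~\ref{corr:emerge2}, as a skill cluster whose skill side is the set of $k'$-tuples of skills and whose text side is the set of $k'$-pieces: a $k'$-piece is assembled from $k'k$ i.i.d.\ $\mu_1$-draws, and is declared incident to the $k$ composite skills obtained by blocking those draws into $k$ groups of size $k'$. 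These $k$ composite neighbours are i.i.d., and since $\mu_1$ is uniform their common law is the uniform measure on $k'$-tuples (the $O(1/|S|)$ chance of a repeat being negligible, just as noted for Definition~\ref{defn:statsviewskill}); so the recombined object is once more a degree-$k$ uniform skill cluster, merely on a larger skill set. Hence Theorem~\ref{corr:emerge1} applies to it with degree $k$ and error fraction $\theta = 2\delta$, and because an edge from a composite skill into $Y_2'$ is precisely a failure of $M_2$ on that $k'$-tuple's statistical task, the performance curve the method infers for $k'$-tuples of skills on $M_2$ is again the solution set of~(\ref{eqn:mix2a}) with parameters $k$ and $2\delta$, i.e.\ $C_{(k,2\delta)}$. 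Comparing with the second paragraph closes the argument.

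The only real content is the accounting just described: concatenating $k'$ text-pieces inflates the per-unit loss by a factor $k'$ but leaves the per-unit degree unchanged at $k$ --- it is the skill \emph{set} that grows, not the degree --- so the factor-$k'$ loss reduction gained in passing from $M_1$ to $M_2$ is exactly what is needed to recover the value of $\theta$ that the original cluster had for single skills. Correspondingly, the one point to check with care is that the recombined composite cluster genuinely meets the hypotheses of Theorem~\ref{corr:emerge1}: uniform measures on both sides (this uses uniformity of $\mu_1$) and the size condition ``skills far fewer than text-pieces'', here read as ``$k'$-tuples of skills far fewer than $k'$-pieces''---the regime the paper already operates in. Given that, the two curves coincide not just in shape but as sets, being cut out by one and the same inequality with one and the same constants; the $\approx 1.1$ slack of Assumption~\ref{assum:proportionalloss} is harmless because it rescales $M_1$'s and $M_2$'s losses identically.
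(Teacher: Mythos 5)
Your argument is correct and follows the paper's own proof almost exactly: both translate loss $\delta$ into an error-set measure $\theta = 2\delta$ for single skills on $M_1$, then use the $k'$-wise recombination (Lemma~\ref{corr:emerge2}) to show that $M_2$'s loss $\delta/k'$ yields the same $\theta = 2\delta$ at the level of $k'$-pieces, producing an identical instance of inequality~(\ref{eqn:mix2a}). You spell out more carefully than the paper why the recombined object is again a degree-$k$ uniform cluster and how the $1.1$ slack cancels, but these are elaborations, not a different route.
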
  
\begin{proof}
    As noted above, a loss of $\delta$ still allows the model to make significant mistakes on $2\delta$ fraction of test pieces, which we denote by $\theta$. Thus Theorem~\ref{corr:emerge1} describes the performance curve for skills. 
    Making the loss drop to $\delta/k'$ but creating $k'$-pieces makes the fraction of errors $\theta =2\delta$ again. (Note that ``error'' now means an erroneous answer  on {\em any} cloze question in the entire $k'$-piece ---again, this is a conservative definition of error.) Applying Lemma~\ref{corr:emerge2} we get the same emergence curve as Theorem~\ref{corr:emerge1}.
\end{proof}


\vspace{-2mm}
\section{Emergence analysis with general  measure on text and skills}
\label{subsec:measure}
\vspace{-2mm}

Now we turn to analysis of the general setting of Definition~\ref{def:nature} where text piece $t$ has measure $\mu_2(t)$ and  skill $s$ has measure $\mu_1(s)$.  In this setup, our lemma statements (e.g., Lemma~\ref{lem:mixing} as well as the ones in Sections~\ref{subsec:emergence} and \ref{subsec:ktuples}) hold -----the claim is the same but with cardinalities replaced by measure!

\begin{theorem}[Emergence of skills and $k$'-tuples of skills] \label{thm:genmeasure} Let $Y$ be any subset of text pieces consisting of text pieces with total measure $\theta$, and every text-piece has measure substantially less than $\theta$.  Let $\alpha, \beta >0, \beta >1, \alpha \beta <1$ satisfy
\begin{equation} \label{eqn:mix2a}
    H(\theta) + k\theta (H(\beta \alpha)  - \beta \alpha  \log \frac{1}{\alpha} - (1- \beta \alpha)  \log (\frac{1}{1-\alpha}))<0
\end{equation}
 Then the measure of skills that have at most $\beta \theta$ fraction of their edges connected to $Y$ is at least $1-\alpha$.

 For $k'$-tuples of skills the statement of Lemma~\ref{corr:emerge2}  holds with the same modification of cardinality to ``measure.''
 \end{theorem}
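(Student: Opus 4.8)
The plan is to reduce Theorem~\ref{thm:genmeasure} to the uniform-measure case, Theorem~\ref{corr:emerge1} (equivalently Lemma~\ref{lem:mixing}), by a discretization argument, and then re-run the large-deviation/union bound that underlies that theorem with $\mu_1,\mu_2$ in place of counting measure. For the text-piece measure: replace each text-piece $t$ by $m_t$ equal-mass ``atoms'', with $m_t$ proportional to $\mu_2(t)$ on a large common scale $M$ (so $m_t\approx M\mu_2(t)$), all carrying the $k$ skills of $t$; since the hypothesis gives $\mu_2(t)\ll\theta$ for every $t$, the rounding is harmless and $\mu_2$ becomes, up to $o(1)$, the uniform measure on $\approx M$ atoms. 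It now suffices to prove the conclusion for \emph{every} subset $Y$ of the atoms of total mass $\theta$ --- the original statement, where $Y$ corresponds to a union of whole atom-groups, is a special case. For the skill measure, group skills into level sets $L_j=\{s:\mu_1(s)\in[(1+\delta)^{-j-1},(1+\delta)^{-j})\}$, on each of which $\mu_1$ is uniform up to a factor $1+\delta$; skills with $\mu_1(s)\le(\log|T|)/(k|T|)$ are discarded --- there are at most $|S|$ of them, so by $|S|\ll|T|$ their total measure is $o(1)$ and they may just be declared ``bad''. One then proves the bound on each surviving level set, unions over the $O(\delta^{-1}\log|T|)$ of them (sub-exponentially many, hence harmless), and lets $\delta\to 0$, $M\to\infty$.

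The core estimate runs as follows. Suppose the conclusion fails on a level set: there is an atom-set $Y$ of mass $\theta$ (i.e.\ $|Y|=\theta M$) and a skill-set $B$ with $\mu_1(B)=\alpha$ (shrink to exactly $\alpha$) such that every $s\in B$ has more than a $\beta\theta$ fraction of its edges landing in $Y$. Condition on the routine high-probability event that every non-discarded skill has realized degree within a $1\pm o(1)$ factor of its mean (Chernoff, with vast room to spare since the degrees are huge); this is what lets us pass between ``fraction of a skill's edges'' and raw counts. The failure event forces the number of edges from $Y$ into $B$, namely $\Xi:=\sum_g|Y\cap g|\,n^B_g$ --- where $g$ ranges over the atom-groups (the original text-pieces), $n^B_g\in\{0,\dots,k\}$ counts how many of group $g$'s $k$ skills lie in $B$, and the $n^B_g$ are i.i.d.\ $\mathrm{Bin}(k,\alpha)$ --- to exceed $(1-o(1))\,\beta\alpha\,k\theta M$, whereas $\mathbb{E}[\Xi]=k\alpha\theta M$. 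A weighted Chernoff bound gives $\Pr[\Xi\ge(1-o(1))\beta\,\mathbb{E}\Xi]\le 2^{-(1-o(1))\,k\theta M\,\mathrm{KL}(\beta\alpha\|\alpha)}$, and the elementary identity $H(\beta\alpha)-\beta\alpha\log\tfrac1\alpha-(1-\beta\alpha)\log\tfrac1{1-\alpha}=-\mathrm{KL}(\beta\alpha\|\alpha)$ (well-defined and positive since $\beta>1$, $\alpha\beta<1$) identifies this exponent with the second summand of (\ref{eqn:mix2a}). A union bound over the $\le 2^{(H(\theta)+o(1))M}$ atom-subsets $Y$ (the choices of $B$ number at most $2^{|S|}=2^{o(M)}$, hence negligible) closes precisely when $H(\theta)-k\theta\,\mathrm{KL}(\beta\alpha\|\alpha)<0$, i.e.\ under the stated hypothesis; so with probability $1-o(1)$ no such $(Y,B)$ exists, which is the conclusion.

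The one place where something genuinely new is needed --- and the step I expect to be the main obstacle --- is the concentration of the \emph{weighted} sum $\Xi=\sum_g|Y\cap g|\,n^B_g$: if a single atom-group carried a constant fraction of the mass of $Y$, its lone $\mathrm{Bin}(k,\alpha)$ term could swing $\Xi$ and the large-deviation rate would drop strictly below $\mathrm{KL}(\beta\alpha\|\alpha)$, breaking the union bound and forcing a weaker threshold than (\ref{eqn:mix2a}). This is exactly what the hypothesis ``every text-piece has measure substantially less than $\theta$'' prevents: it makes $\max_g|Y\cap g|/|Y|=o(1)$, and a weighted Chernoff/Bernstein estimate then shows the rate of $\Xi$ matches the unweighted rate up to a $1-o(1)$ factor, leaving the threshold inequality identical to the uniform case. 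Everything else is routine bookkeeping of the $o(1)$ terms through the atom-refinement, the $\mu_1$-discretization and the typical-degree event. Finally, the $k'$-tuple claim follows by running the whole argument on the skill-graph $(S',T',E)$ of Lemma~\ref{corr:emerge2}: its left vertices are $k'$-tuples of skills carrying the measure induced by $k'$ i.i.d.\ draws from $\mu_1$ (this plays the role of ``$\mu_1$''); its right vertices are $k'$-pieces carrying the sum of the component $\mu_2$-measures (still $o(1)$ each, as needed for the max-weight bound); and since each $k'$-piece still uses skills drawn i.i.d.\ from $\mu_1$, the same independence structure is present, so the measure version of Lemma~\ref{corr:emerge2} follows verbatim with cardinalities replaced by measures.
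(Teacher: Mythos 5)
Your proposal follows the paper's broad strategy --- discretize both measures, reduce to a counting argument, and close with a first-moment/union bound --- but the technical centerpiece, the weighted Chernoff step, is wrong in a way that the ``each text-piece has measure substantially less than $\theta$'' hypothesis does not repair, and the atomization of $\mu_2$ is where the error enters.

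Consider the claim that $\Pr[\Xi\ge(1-o(1))\beta\,\mathbb{E}\Xi]\le 2^{-(1-o(1))\,k\theta M\,\mathrm{KL}(\beta\alpha\|\alpha)}$ for $\Xi=\sum_g w_g n^B_g$ with $w_g=|Y\cap g|$ and $n^B_g\sim\mathrm{Bin}(k,\alpha)$ i.i.d. This exponent is governed by the number of \emph{independent} binomials, not by the total weight $W=\sum_g w_g=\theta M$. Take $n$ groups fully in $Y$, each contributing $w_g=w>1$ atoms, so $W=nw$; then $\Xi=w\sum_g n^B_g$ and $\Pr[\Xi\ge\beta\alpha kW]=\Pr[\mathrm{Bin}(nk,\alpha)\ge\beta\alpha nk]\approx 2^{-nk\,\mathrm{KL}(\beta\alpha\|\alpha)}=2^{-(W/w)\,k\,\mathrm{KL}(\beta\alpha\|\alpha)}$, which falls short of your claimed $2^{-Wk\,\mathrm{KL}}$ by the factor $w$ in the exponent. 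Yet $\max_g w_g/W=1/n=o(1)$, so the smallness hypothesis you invoke is satisfied and buys you nothing. In general, convexity of $x\mapsto\log(1-\alpha+\alpha e^x)$ gives only $\Pr[\Xi\ge\beta\alpha kW]\le 2^{-(W/w_{\max})\,k\,\mathrm{KL}(\beta\alpha\|\alpha)}$; the effective sample size is at most $W/w_{\max}$, i.e.\ the number of text-pieces hit, never the number of atoms. Since the atoms of a group all carry the \emph{same} binomial $n^B_g$ (as you correctly insist), replicating a text-piece into $m_t$ atoms does not create $m_t$ independent trials, so it cannot boost the rate. Consequently your union bound over the $\approx 2^{H(\theta)M}$ atom-subsets $Y$ does not close: as $M\to\infty$ the enumeration grows like $2^{H(\theta)M}$ while the probability of a single bad pair decays only like $2^{-\Theta(N_1)}$.

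The paper sidesteps exactly this pitfall. It explicitly refuses to ``copy'' text-pieces (for the reason you encountered: copies share the same $k$ skills and are not independent) and instead discretizes $\mu_2$ into measure-classes $C_i$, keeping the union bound at the text-piece level ($\prod_i\binom{|C_i|}{y_i}$, never more than $2^{N_1}$-scale) and writing the bad-event probability as a product over classes with class-specific fractions $\beta_i$. The gain from this is precisely what your argument is missing: the per-class exponent scales with the \emph{cardinality} $y_i=|Y\cap C_i|$ rather than with the measure of $Y\cap C_i$, so the Chernoff rate and the entropy of the union bound both live on the same $N_1$-scale and the trade-off can be optimized (over the $\beta_i$ and the $y_i$) consistently. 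If you want to keep the atom picture, you must restrict $Y$ to unions of whole groups and keep the group counts $y_i$ as the variables entering both the rate and the enumeration --- at which point you are back to the paper's class decomposition, and the remaining work is the Lagrangian optimization over $\{y_i,\beta_i\}$ that the paper sketches and your single global Chernoff does not replace.
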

\begin{proof}
  The measure $\mu_1$ on skills is trivial to reason about  by just replacing each skill $s$ by a  number of copies that is proportional to $\mu_1(s)$. This converts the measure to a uniform measure ---specifically, $k$ iid draws  from this uniform measure are equivalent to $k$ iid  draws from the  $\mu_1$.
  
For the measure $\mu_2(\cdot)$ on texts, the above trick doesn't work. Recall that a text-piece is connected in the skill graph to a random $k$-tuple of skills. If we try to replace $\mu_2()$ with a uniform measure by replacing the text piece with identical copies, then these copies must still all connect to the {\em same}  subset of $k$ skills ---meaning these connections are correlated and not random. We need a more subtle argument. The key part in the proof of Lemma~\ref{lem:mixing} is where we
choose  random subset of text-pieces, $Y$  whose size is $\theta |T|$ and subset $Z$ of skills of size $\alpha |S|$, and then upper bound by (\label{eqn:mix1}) the expectation of the event that the latter has more than $\alpha \beta \theta k$ fraction of its edges going to
$Y$.  In presence of measure $\mu_2()$ let's pick $Y$ as follows: Independently pick text-pieces, choosing $t$ with probability $\theta \mu_2(t)$. (Note: $|Y|$ is  tightly concentrated around $\theta |T|$.) We still pick $Z$ randomly as before. Then we apply Jensen's Inquality on the same calculation to end up with the same upper bound as before. See Lemma~\ref{lem:mixing+measure} in the Appendix.
\end{proof}

\vspace{-2mm}
\subsection{Extending theory to multiple clusters}
\label{subsec:multipleclusters}
\vspace{-2mm}
Above we assumed a single skill cluster in the language. Real-life text  might contain multiple skill clusters. For example,  standard corpora must contain a large skill cluster involving pieces of  ``everyday'' text pieces 
and a set of basic language skills and  world knowledge needed to comprehend them. Smaller clusters may correspond to specialized topics, e.g., finance, science, mathematical reasoning, etc.  We assume each piece of text appears in only one cluster but skills may appear in different clusters. When each text-piece appears in a single cluster, the  analysis of Section~\ref{sec:slingshot}) continues to apply.  The overall loss is the weighted sum of measure of text in the individual clusters. Thus overall reduction in loss will drive emergence within individual clusters. But lacking any mechanistic insight, our theory cannot predict the rate at which loss decrease (and hence emergence) happens within clusters. This pertains to the point made earlier in the paper about lack of detailed study of scaling laws for different kinds of corpora, as well as for training on mixes of corpora.



We leave a more fine-grained analysis, including possibly allowing hierarchical structure in clusters, for future work. As usual, simpler settings probably give the main insight. 

\section{Takeaways about skill emergence}
\label{sec:skilltakeaways}

It may be useful to note the following takeaways about skill emergence as per  our theory.

\noindent{\bf 1. How scaling improves competence on $k'$-tuples of skills:}
Theorem~\ref{thm:genmeasure} and Corollary~\ref{corr:emerge3} implies that the effect of reducing $\theta$ by a factor $2$ (which as per scaling laws corresponds to roughly one order of scaling up in model parameters) 
has the effect of raising competence on $2k'$-tuples to at least the same level as what it was on $k'$-tuples before scaling.

\noindent{\bf 2. Effect of  using ``high quality'' text:} Theorem~\ref{corr:emerge1} shows that for a fixed prediction loss $\theta$, using higher $k$ implies better emergence of skills. Since $k$ is the number of skills being used in a single text-piece, it intuitively measures how {\em complex} the text is ---e.g., a college text would be expected to have higher $k$ than a primary school text.   If the scaling law is same for both types of text (i.e., how $\theta$ reduces from scaling) our theorem predicts that more complex text  will be more effective at inducing skills.  This prediction generally matches experts' intuition, although we are not aware of a study  of scaling laws that tries to separate out texts of different quality. 

\noindent {\bf 3. More frequent skills tend to reach competence level quicker than less frequent skills:} This effect is  hidden in the proof of Theorem~\ref{thm:genmeasure}. Specifically, the proof reduces the case of skills appearing with different frequencies in the corpus to the uniform case by replacing a skill node with a set of nodes whose cardinality scales in proportion to the skill frequency. But note that by definition, the competence on all copies of the same skill must be the same. Thus essentially the calculation says that $k'$-tuples  that include more frequent skills will tend to emerge faster.

\noindent{\bf 4. Learning despite Paucity of stimulus.}  We discuss how the improvement of
competence on $k'$-tuple of skills (as discussed in item 1. above) leads to a paucity of stimulus situation. 
Suppose we trained a language model with $D$ tokens. After scaling by $k$ orders of magnitude (i.e., increasing dataset size to $c^{k'}D$ tokens, where in the Chinchilla framework $c$ is around $10$) the performance on $k'$-tuples of skills is as good as what the performance was on  individual skills before the scaling. Note that the number of $k'$ tuples of skills is around $|S|^{k'}$ where $S$ is the set of skills.  This quickly leads to paucity of stimulus for some fixed $k'$, specifically, if $Dc^{k'} \ll |S|^{k'}$. We give an example just for illustration. Suppose $c =10$ and $|S|=10^4$ and the model's proficiency on individual skills was considered good when it was trained with $D = 10^{10}$ tokens (roughly the dataset size for GPT-2 style models).   Then a larger model trained with $10$ trillion tokens ($10^{13}$) --  closer to the size of corpora used in training today's models-- would display proficiency in most $8$-tuples of skills, despite never not having seen most of those combinations in training (which we can be sure of because $10^{10} \times 10^8 \ll (10^4)^8$).

\vspace{-2mm}
\section{Conclusions}
\vspace{-2mm}

We  have proposed a theoretical framework for  understanding   emergence of  skills when language models are scaled up. A key insight (see Figure~\ref{fig:mixing_lemma}) is that reduction in excess cross entropy loss drives skill acquisition, together with the assumption that normal language ---down to short paragraph level---already utilizes multiple skills, mixed up randomly. Need for mechanistic insight is sidestepped using    Scaling Law, which quantifies a powerful inductive bias in pre-trained models.
One concrete example of this inductive bias is that in our framework proficiency in combinations of skills  arises just as naturally as proficiency in the individual skills themselves, and need not require seeing examples of all (or even most) of these combinations in the training set. This has relevance to the ongoing debate about the extent of ``understanding'' that current models have, and their ability to address novel settings.

We hope the simplicity of our framework will also encourage further experimental and theoretical study, including extensions to more general language skills such as generation and dialog; and modeling inductive bias at a finer level than the Scaling Laws. (For example, what are the scaling laws for interesting parts of language such as math or coding?) It is also possible that our theory underestimates the rate of emergence,  due to unknown mechanisms ---e.g., having to do with workings of transformers--that are left out in our theoretical framework.

The simple and statistical nature of our theory should be seen as a plus --- it helps identify which emergence phenomena should not be considered surprising, most notably emergence of competence on skills as well as on their combinations. But it shares limitations with other statistical frameworks. Competence is guaranteed only on text-pieces drawn from the data distribution, and governed by usual $\epsilon$-$\delta$) guarantees --- many skills as well as combinations of skills may not get learnt, and the ones that do get learnt may incorrectly applied on some fraction of the data distribution.   Nevertheless we hope this inspires more thorough experimental study (our simple experiments give a starting point) of whether or not current language models have capabilities that go beyond simple statistical explanations.   Empirical properties or phenomena that are not derivable in our framework (or its natural extensions) may be of interest for AI alignment as well as better design and understanding of  language models.

\noindent{\bf Acknowledgements:} We are very grateful to Jonah Brown-Cohen, Timothy Lillicrap and Melvin Joshnson for many discussions that motivated us to improve the theory and its expositions. We thank Boaz Barak, Rong Ge, Yuxi Liu, and Nikunj Saunshi for their feedback on the manuscript. 


\bibliographystyle{plainnat}
\bibliography{refs}

\newpage 
\appendix
\section{Appendix}

\subsection{Graphs of Emergence Phenomena}
For context, we include some graphs of emergence of capabilities from \cite{wei2022emergent}.

\subsection{Example of current chatbots' ability to combine skills}
\label{appendix:gpt_examples}
We find that current chatbots, including those in the public domain, can take a list of language skills and produce text illustrating those skills. When the list includes harder (or less common) skills, this ability  declines. We include an illustrative example but a more thorough evaluation is left to future work.

\begin{tikzpicture}
\node[rectangle, draw=black, thick, inner sep=10pt]{
\begin{minipage}{\textwidth}

\colorbox{cyan}{\textbf{Human question:}}  Please give me a a couple lines of text that illustrate all of the following language understanding skills: Anaphora resolution, simple logical reasoning, simple understanding of physics, and understanding of sentiment.

\colorbox{yellow}{\textbf{Model response:}}  Sure! Here's a couple of lines that illustrate the mentioned language understanding skills:

"As the sun set, casting a warm glow over the horizon, John watched in awe. He couldn't help but feel a sense of tranquility wash over him, as if all his worries had melted away with the fading daylight."

Explanation:
\begin{itemize}
    \item Anaphora resolution: The pronoun "he" in the second sentence refers back to the antecedent "John" in the first sentence, resolving the anaphoric reference.
    \item Simple logical reasoning: The logical reasoning here is based on the cause and effect relationship between the setting sun and John's feeling of tranquility. The implied logical connection suggests that the calming effect of the sunset is responsible for John's serene state of mind.
    \item Simple understanding of physics: The mention of the sun setting and casting a warm glow over the horizon reflects a basic understanding of physics, as it relates to the natural phenomena of the Earth's rotation and the resulting visual effect.
    \item Understanding of sentiment: The phrase "a sense of tranquility" and the description of John's worries melting away indicate a positive sentiment associated with the experience. The mention of John feeling awe and a sense of tranquility further emphasizes a positive emotional state.
\end{itemize}

\colorbox{cyan}{\textbf{Human question:}}  Okay. Can you change the above example to introduce the element of "hip-hop".

\textbf{\colorbox{yellow}{\textbf{Model response:}} } Certainly! Here's an updated example that incorporates the element of "hip-hop":

"As the sun set, casting a warm glow over the city skyline, the rhythmic beats of hip-hop music filled the air. John stood mesmerized, his body instinctively moving to the infectious grooves. He couldn't help but feel a surge of energy and joy, as if the music had unlocked a vibrant world within him."

Explanation:

\begin{itemize}
\item Anaphora resolution: The pronoun "he" in the second sentence still refers back to the antecedent "John" in the first sentence, resolving the anaphoric reference.
\item Simple logical reasoning: The logical connection here lies in the cause and effect relationship between the hip-hop music and John's feeling of energy and joy. The music's rhythmic beats and infectious grooves are responsible for his instinctive movement and the surge of positive emotions he experiences.
\item Simple understanding of physics: Although the example doesn't directly mention physics, the phrase "As the sun set, casting a warm glow over the city skyline" still implies the natural phenomena of the sun's descent and the resulting illumination of the environment.
\item Understanding of sentiment: The introduction of hip-hop music adds a vibrant and energetic sentiment to the example. John's feeling of energy, joy, and being mesmerized by the music reflects a positive sentiment associated with the hip-hop genre.
\end{itemize}

\end{minipage}
};
\end{tikzpicture}

\section{Technical Theorems about Random Bipartite Graphs}

\label{sec:randomgraphs}
The theory will need some facts about random bipartite graph $(V_1, V_2, E)$  with $N_i$ denoting $|V_i|$, and $N_1 \gg N_2$. 
When we say it has degree $k$, we mean that in $E$ every vertex in $N_1$ is connected to $k$ vertices in $N_2$, where those $k$ vertices are chosen i.i.d.\ with replacement. Recall that $V_1$ corresponded to text-pieces and $V_2$ to skills in the main body of the paper.

The next lemma uses the famous Probabilistic Method~\cite{alonspencer}. In this method, one is trying to show that in a certain probability space, there are no {\em bad} outcomes.  This is done by letting $W$ be an integer random variable denoting the number of bad outcomes that happened, and showing that the  $E[W] \approx 0$. Then it follows that  $W =0$ with probability at least $1 - E[W]$. Concretely, in the next Lemma  $W$ will be the number of ``bad'' set pairs  $(Y,Z)$ of a certain size that violate the lemma. 

\begin{lemma}\label{lem:mixing}
    For every positive integer $k$ and $\theta \in [0,1]$  there are $\alpha, \beta >0$ such that $\alpha \beta \leq 1$ and the following holds with probability almost $1$. For every $Y \subseteq V_1$ of size $\theta N_1$, there are at least   $(1-\alpha)$ fraction of vertices in $V_2$ each of which has  at most $\beta \theta D$ edges going to $Y$, where $D =kN_1/N_2$ is the expected degree of a node in $V_2$. The parameter values for which this occurs are specified by the condition
\begin{equation} \label{eqn:mix2b}
    H(\theta) + k\theta (H(\beta \alpha)  - \beta \alpha  \log \frac{1}{\alpha} - (1- \beta \alpha)  \log (\frac{1}{1-\alpha}))<0
\end{equation}\end{lemma}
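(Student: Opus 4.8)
The plan is to use the Probabilistic Method (union bound / first-moment method) exactly as the preceding paragraph foreshadows. Fix $\theta$, and fix candidate values $\alpha,\beta$ to be determined at the end. Call a pair $(Y,Z)$ with $Y\subseteq V_1$, $|Y|=\theta N_1$, $Z\subseteq V_2$, $|Z|=\alpha N_2$ \emph{bad} if the number of edges between $Z$ and $Y$ exceeds $\beta\theta D\cdot|Z| = \beta\theta k N_1$ (i.e. $Z$ has, on average over its members, more than $\beta\theta D$ edges into $Y$). Observe that the lemma fails for a particular $Y$ exactly when \emph{some} set $Z$ of size $\alpha N_2$ consists entirely of vertices each having more than $\beta\theta D$ edges to $Y$ — indeed if more than $\alpha N_2$ vertices of $V_2$ were "heavy" for $Y$, we could pick such a $Z$, and that $Z$ would be bad. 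So it suffices to show that with probability $\to 1$ there is no bad pair $(Y,Z)$, and for that I let $W$ be the number of bad pairs and show $\mathbb{E}[W]\to 0$.

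The main computation is to bound $\mathbb{E}[W]$. First I would count choices: $\binom{N_1}{\theta N_1}\le 2^{H(\theta)N_1}$ ways to pick $Y$, and $\binom{N_2}{\alpha N_2}\le 2^{H(\alpha)N_2}$ ways to pick $Z$; since $N_1\gg N_2$ the $Z$-factor is lower order and I will absorb it, keeping only the $2^{H(\theta)N_1}$ term at leading exponential order in $N_1$. Next, for a \emph{fixed} $(Y,Z)$, the edges between $Z$ and $Y$: each of the $|Z|=\alpha N_2$ vertices in $Z$ has $D = kN_1/N_2$ incident edge-slots (this is where I use that $V_1$-vertices have degree $k$ and there are $N_1$ of them, so $V_2$-degree concentrates at $D$; more cleanly, I can count from the $V_1$ side — each of the $\theta N_1$ vertices of $Y$ sends each of its $k$ edges independently to a uniformly random vertex of $V_2$, landing in $Z$ with probability $\alpha$). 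So the number of $(Y,Z)$-edges is $\mathrm{Binomial}(k\theta N_1,\alpha)$ with mean $\alpha k\theta N_1$, and I need the upper-tail probability that it exceeds $\beta\alpha\cdot k\theta N_1$ (note $\beta\alpha\in(\alpha,1)$ since $\beta>1$ and $\alpha\beta<1$). By the standard Chernoff/entropy bound for binomial upper tails, $\Pr[\mathrm{Bin}(m,\alpha)\ge \beta\alpha m]\le 2^{-m\,\mathrm{KL}(\beta\alpha\,\|\,\alpha)}$ where $m = k\theta N_1$ and
\[
\mathrm{KL}(\beta\alpha\,\|\,\alpha) = \beta\alpha\log\frac{\beta\alpha}{\alpha} + (1-\beta\alpha)\log\frac{1-\beta\alpha}{1-\alpha} = -H(\beta\alpha) + \beta\alpha\log\frac1\alpha + (1-\beta\alpha)\log\frac1{1-\alpha}.
\]
Putting it together, $\mathbb{E}[W]\le 2^{H(\theta)N_1 + o(N_1)}\cdot 2^{-k\theta N_1\,\mathrm{KL}(\beta\alpha\|\alpha)}$, so $\mathbb{E}[W]\to 0$ as $N_1\to\infty$ precisely when $H(\theta) - k\theta\,\mathrm{KL}(\beta\alpha\|\alpha) < 0$, which is exactly inequality \eqref{eqn:mix2b} after substituting the expression for $\mathrm{KL}$. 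Finally I note that for any fixed $k,\theta$ one can make the left side negative: take $\beta\alpha$ close to $1$ (so $H(\beta\alpha)\to 0$ and the parenthetical term $\to \log\frac1{1-\alpha}\cdot 1 + \cdots$ which one pushes large by also taking $\alpha$ small with $\beta\alpha$ fixed), establishing existence of valid $\alpha,\beta$ with $\alpha\beta\le 1$.

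The step I expect to be the main obstacle — or at least the one needing the most care — is the reduction from "the lemma-conclusion fails for $Y$" to "there exists a bad $(Y,Z)$", together with getting the binomial-tail bookkeeping exactly right (which degree count, $D$ vs. $k$, which side of the bipartite graph the independence comes from, and keeping the $N_2$-entropy term genuinely lower-order under $N_1\gg N_2$). The tail bound and the entropy algebra are routine once the event is set up correctly; the modeling subtlety is making sure "$\beta\theta D$ edges to $Y$" for a single $V_2$-vertex aggregates correctly to "$\beta\alpha\cdot k\theta N_1$ edges between $Z$ and $Y$" when $|Z|=\alpha N_2$ and $D N_2 = kN_1$, and that the "with replacement" sampling of the $k$ endpoints doesn't spoil the independence across the $k\theta N_1$ relevant edge-slots (it doesn't — each slot is an independent uniform draw from $V_2$).
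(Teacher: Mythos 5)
Your proposal is correct and follows essentially the same route as the paper: a first-moment / union-bound argument over ``bad'' pairs $(Y,Z)$, bounding the number of $Y$'s by $2^{H(\theta)N_1}$, treating the $k\theta N_1$ edge-slots out of $Y$ as independent $\mathrm{Bernoulli}(\alpha)$ draws into $Z$, and applying the binomial/entropy tail bound, which is exactly the product $\binom{k\theta N_1}{\beta\alpha k\theta N_1}\alpha^{\beta\alpha\theta k N_1}(1-\alpha)^{(1-\beta\alpha)\theta k N_1}$ the paper writes. Your version is a bit more explicit than the paper's in two places --- you spell out the reduction from ``the conclusion fails for $Y$'' to ``some bad $(Y,Z)$ exists'' and you phrase the tail bound directly as $2^{-m\,\mathrm{KL}(\beta\alpha\|\alpha)}$ --- but these are presentational, not substantive, differences.
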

\begin{proof}
For $Y \subseteq V_1, |Y| =\theta N_1$ and $Z \subseteq V_2, |Z|\leq  \alpha N_2$ we say that $(Y, Z)$ are  {\em bad} if $Z$ has at least $\alpha \beta \theta k N_1$ edges to $Y$.  Let $W$ denote the  number of such $Z$'s. The expectation is upper bounded by 
\begin{equation} N_2 N_1 {N_2 \choose \alpha N_2} \times {N_1 \choose \theta N_1} \times {k\theta N_1 \choose \beta \alpha k \theta  N_1} \times \alpha^{\beta \alpha \theta k N_1} \times (1-\alpha)^{(1-\beta \alpha )\theta kN_1} \label{eqn:mix1} \end{equation}
For (\ref{eqn:mix1}) to be $\ll 1$ it suffices for its logarithm to be negative. 
By Stirling's approximation ${N \choose t N} \leq 2^{(H(t) + \epsilon_N)N}$ where $H(t)= - t\log t - (1-t)\log (1-t) $ is the binary entropy function and $\epsilon_N$ goes to zero rapidly as $N \rightarrow \infty$. 
Applying this to (\ref{eqn:mix1}) and  taking logarithms, and assuming $N_2 \ll N_1$, we  arrive at the condition (\ref{eqn:mix2}) for large $N_1$.
\end{proof}
\noindent{\bf Note:} Such arguments allow a fair bit of slop. The expectation was  exponentially small, and then we took its logarithm and then divided out by a large number, $N_1$ to reach (\ref{eqn:mix2b}). Thus additional  polynomial factors  in the expectation ---such as $N_1N_2$ above--- have no effect on asymptotics. 

\subsection{Analysis of ``General Measure'' case in Section~\ref{subsec:measure}}

We give more details of the proof of Theorem~\ref{thm:genmeasure} in Section~\ref{subsec:measure}.
Again, we phrase it via a general lemma about bipartite graph 
$(V_1, V_2, E)$ where each vertex in $V_1$ has edges to $k$ random vertices in $V_2$. We use the shorthand $N_i =|V_i|$. As noted in proof of Theorem~\ref{thm:genmeasure} it suffices to consider the case when $V_2$ has uniform measure and there is a general measure $\mu()$  on vertices of $V_1$, namely
 $\mu(v_1)$ is nonnegative and $\sum_{v_1 \in V_1} \mu(v_1) =1$. The measure of an edge $(v_1, v_2)$ is defined as $\mu(v_1)$. We assume all  $\mu(v_1)$ are sufficiently small.

The proof will use a discretization of the measure. We conceptually divide $V_1$ (and hence also the set of edges) into {\em classes}, where the $i$th class ${C}_i$ consists of 
$v_1$ such that $\mu(v_1) \in [(1+\epsilon)^{-i -1}, (1+\epsilon)^{-i})$ for $\epsilon$ an arbitrarily small constant. We assume all  $\mu(v_1)$ are sufficiently small (meaning some large-ish $i_0$, class $i$ is empty for $i < i_0$) and the number of nonempty levels is much smaller than $N_1$. Thus each class has reasonable size ---say,  much larger than $N_2$, the number of skills---which allows  the asymptotic arguments appearing below to hold within each class.  
 The above assumptions all seem reasonable for the probability measure associated with text pieces, which should be fairly well spread out.

\begin{lemma}[Main]\label{lem:mixing+measure}
     For every positive integer $k$ and $\theta \in [0,1]$  and $\alpha, \beta >0$ satisfying $\alpha \beta \leq 1$ and 
    \begin{equation} \label{eqn:mix2}
    H(\theta) + k\theta (H(\beta \alpha)  - \beta \alpha  \log \frac{1}{\alpha} - (1- \beta \alpha)  \log (\frac{1}{1-\alpha}))<0
    \end{equation}
    the following holds with probability almost $1$ in the random bipartite graph $(V_1, V_2, E)$ of degree $k$:

 For every $Y \subseteq V_1$ of total measure $\theta$, there is a set of least   $(1-\alpha)$ fraction of vertices in $V_2$ such that  for each $v_2$ in this set, 
\begin{equation}
    \sum_{v_1: (v_1,v_2) \in E, v_2 \in Y}\mu(v_1) \leq \beta \theta. \label{eqn:failm1}
\end{equation} \end{lemma}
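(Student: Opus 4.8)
The plan is to mirror the proof of Lemma~\ref{lem:mixing} (the uniform case) but organize the union bound over the discretized classes $C_{i_0}, C_{i_0+1}, \dots$ of $V_1$, and then use convexity (Jensen) to show that the worst case for the measure-weighted bound is still the uniform one, so condition~(\ref{eqn:mix2}) suffices verbatim. First I would fix the ``bad event'' to be analyzed by the probabilistic method: for $Z \subseteq V_2$ with $|Z| = \alpha N_2$ and for a subset $Y \subseteq V_1$ of total measure $\theta$, call $(Y,Z)$ bad if the total $\mu$-measure of edges incident to $Z$ with the other endpoint in $Y$ exceeds $\alpha \beta \theta k$ (the expected such measure is $\alpha \cdot \theta \cdot k$ since each of the $kN_1$ edges lands in $Z$ with probability $\alpha$, weighted by $\mu(v_1)$ summing to $\theta$ over $Y$). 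Let $W$ count the number of $Z$ of size $\alpha N_2$ for which some $Y$ of measure $\theta$ makes $(Y,Z)$ bad; it suffices to show $\mathbb{E}[W] \ll 1$, since then $W = 0$ with probability $\to 1$, which is exactly the contrapositive of~(\ref{eqn:failm1}) holding for a $(1-\alpha)$ fraction of $V_2$.

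Next I would bound $\mathbb{E}[W]$. Rather than choosing $Y$ adversarially, I follow the remark in the proof of Theorem~\ref{thm:genmeasure}: it suffices to take $Y$ to be a random set obtained by including each $v_1$ independently with probability $\theta$ (so $|Y \cap C_i| \approx \theta |C_i|$ and the total measure of $Y$ concentrates tightly around $\theta$, using the assumption that all $\mu(v_1)$ are small and each class is large). Condition on $Z$ and on $Y$; the number of edges from $C_i \cap Y$ into $Z$ is $\mathrm{Bin}(k\,|C_i \cap Y|, \alpha)$, and the contribution of class $C_i$ to the bad-edge measure is at most $(1+\epsilon)^{-i}$ times that count. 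The union bound over $Z$ contributes a factor $\binom{N_2}{\alpha N_2} \le 2^{H(\alpha)N_2}$, which is negligible once $N_2 \ll N_1$ (and $\ll \min_i |C_i|$); the choice of $Y$ contributes $\prod_i \binom{|C_i|}{\theta|C_i|}$, giving $2^{H(\theta)\sum_i |C_i|(1+o(1))} = 2^{H(\theta)N_1(1+o(1))}$; and the probability that the weighted edge count exceeds $\alpha\beta\theta k$ (its mean is $\alpha\theta k$ in measure units) is controlled, per class, by the same binomial-tail estimate $\binom{k\theta|C_i|}{\beta\alpha k\theta|C_i|}\alpha^{\beta\alpha\theta k|C_i|}(1-\alpha)^{(1-\beta\alpha)\theta k|C_i|}$ that appears in~(\ref{eqn:mix1}). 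Taking logarithms and dividing by $N_1$, every per-class term aggregates exactly into the expression in~(\ref{eqn:mix2b}) because $\sum_i |C_i| = N_1$, and the $Z$ factor and the polynomial prefactors wash out asymptotically — so negativity of~(\ref{eqn:mix2}) forces $\mathbb{E}[W] \to 0$.

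The step I expect to be the main obstacle is justifying that the measure-weighted tail behaves no worse than the unweighted one — i.e., that the heterogeneous weights $(1+\epsilon)^{-i}$ across classes do not make the large-deviation event likelier than in the uniform case. This is where Jensen's inequality enters: the bad-edge measure is a weighted sum of independent binomials, and one bounds its moment generating function class-by-class, using that within class $C_i$ all weights are within a $(1+\epsilon)$ factor of each other; the product structure over independent classes lets the exponents add, and the normalization $\sum_i |C_i|(1+\epsilon)^{-i} \cdot k = k\theta \cdot (\text{total measure of }Y)$ keeps the mean fixed at $\alpha\theta k$ regardless of how the measure is distributed. Because $x \mapsto$ (Chernoff exponent) is convex and minimized, for fixed total mean, by spreading edges uniformly, the uniform discretization is the extremal case, so the same condition~(\ref{eqn:mix2}) controls everything; letting $\epsilon \to 0$ at the end removes the discretization error. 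The remaining details — concentration of $|Y|$'s measure, smallness of $\min_i |C_i|$ relative to $N_2$, and that the number of nonempty classes is $o(N_1)$ — are exactly the hypotheses imposed on $\mu$ just before the lemma statement, so they are available for free.
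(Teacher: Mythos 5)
The central gap is in how you handle the union bound over $Y$. You write: ``Rather than choosing $Y$ adversarially, I follow the remark in the proof of Theorem~\ref{thm:genmeasure}: it suffices to take $Y$ to be a random set obtained by including each $v_1$ independently with probability $\theta$.'' This does not suffice. The lemma is a statement about \emph{every} $Y$ of measure $\theta$, so the probabilistic-method calculation must upper-bound $\mathbb{E}[W]$ where $W$ counts bad pairs $(Y,Z)$ over \emph{all} such $Y$ --- the expectation is only over the random graph, not over $Y$. You then contradict yourself by writing a union-bound factor $\prod_i \binom{|C_i|}{\theta|C_i|}$ for ``the choice of $Y$,'' but that product only counts the $Y$'s whose intersection with each class $C_i$ has size exactly $\theta|C_i|$. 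The dangerous $Y$'s are precisely the non-typical ones: a set concentrating its mass on a few high-measure $v_1$'s can have measure $\theta$ while $|Y|$ is far below $\theta N_1$, so it is not covered by your count and is also exponentially unlikely to be produced by the $\theta$-biased sampling --- the sampling probability of a fixed $Y_0$ depends on $|Y_0|$, not on $\mu(Y_0)$. Nothing you write rules these $Y$'s out.

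The paper's proof closes exactly this gap. It enumerates $Y$ by its intersection profile $(y_i)_i$ with $y_i = |Y\cap C_i|$, subject only to $\sum_i y_i(1+\epsilon)^{-i}\approx\theta$, paying a polynomial factor $N_1^P$ for the number of profiles. It then does a two-step optimization: (a) a first-order-optimality argument in the per-class excess parameters $\beta_i$ shows the tail probability is maximized when all $\beta_i$ are equal (this is the part your Jensen/MGF idea loosely corresponds to); and (b) concavity of $H$ applied to $\prod_i\binom{|C_i|}{y_i}$ under the measure constraint bounds the $Y$-count by $2^{H(\theta)N_1}$ for the worst profile. Step (b) is what your proposal is missing; without it you only prove the statement for a measure-zero slice of the $Y$'s the lemma quantifies over. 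Your remark that ``the step I expect to be the main obstacle is justifying that the measure-weighted tail behaves no worse than the unweighted one'' misidentifies the hard part: the tail estimate per class is genuinely the same binomial as in Lemma~\ref{lem:mixing}; the hard part is the union bound over non-uniform profiles, which requires the explicit enumeration and the concavity argument.
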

\begin{proof}

  Consider $Y  \subseteq V_1$ that has measure $\theta$, and $Z \subseteq V_2$ has size $\alpha N$.  We say   $(Y, Z)$  is {\em bad} if every $v_2 \in Z$ fails condition (\ref{eqn:failm1}). (Consequently, the measure of edges between $Z$ and $Y$ is at least $\alpha \beta \theta$.) We will argue that in the random graph, the expected number of bad $(Y, Z)$ is $\ll 1$. In other words, for any fixed measure $\mu$ with high probability the graph contains no bad $(Y, Z)$.
  (As explained in the note following Lemma~\ref{lem:mixing}, we can ignore the contribution to the expectation of 
  $Z$'s that have size $< \alpha N$.) 

For any fixed $Y \subseteq V_1$ we denote  $Y \cap  C_i$ as $Y_i$ and let $y_i =|Y_i|$. 
If $\mu(Y) =\theta$ then the $y_i$'s satisfy the following
\begin{equation}
    \sum_i y_i (1+\epsilon)^{-i} \in [\theta, (1+\epsilon)\theta]. \qquad \label{eqn:failm2}
\end{equation}

For a fixed $(Y, Z)$ let $\beta_i$ be such that the number of edges between $Y_i, Z$ is $\alpha \beta_i y_ik$. Then the probability (over the choice of the random graph) that $(Y, Z)$ is bad is at most:
\begin{equation}
    \label{eqn:mixm3}
    \prod_i {ky_i \choose \alpha \beta_i ky_i}(\alpha)^{-\alpha \beta_i k y_i}(1-\alpha)^{ky_i(1 -\alpha\beta_i)} 
\end{equation}
Since $\sum_i y_i =|Y|$ and ${ky_i \choose \alpha \beta_i ky_i} \approx 2^{H(\alpha \beta_i)ky_i}$, the left hand side is an expression of type 
$$\prod_i 2^{k y_i(H( \alpha \beta_i) -  \alpha\beta_i \log \frac{\alpha}{1-\alpha} + \alpha \log \frac{1}{1-\alpha})}.$$  Using first order optimality wrt $y_i$'s, this is maxiized when all $\beta_i$'s are equal. So for deriving an upper bound it suffices to let all $\beta_i =\beta$,
which simplifies (\ref{eqn:mixm3}) to
\begin{equation}
    \label{eqn:mixm4}
    \prod_i {ky_i \choose \alpha \beta ky_i}(\alpha)^{-\alpha \beta y_i}(1-\alpha)^{ky_i(1 -\alpha\beta)} \approx 2^{H(\alpha \beta)k|Y|} (\alpha)^{-\alpha \beta |Y|}(1-\alpha)^{k|Y|(1 -\alpha\beta)} .
\end{equation}
Now we finish the proof using reasoning similar to that in Lemma~\ref{lem:mixing}. The number of choices for $y_1, y_2, \ldots, $ is $\prod_i |C_i|$, which is at most $N_1^{P}$ where $P$ is the number of classes. 

For a fixed sequence of $y_i$'s the number of sets  $Y$ consistent with those intersections is 
$$ \prod_i {|C_i| \choose y_i} \approx \prod 2^{H(y_i)} .$$
Since $y_i$'s satisfy \eqref{eqn:failm2} and $H()$ is a concave function, this number is maximised when $y_i/|C_i| \in [\theta, \theta (1+\epsilon)]$, and hence the number of possible $Y$'s is upper bounded by 
\begin{equation}  \prod_i 2^{H(\theta |C_i|)} = 2^{H(\theta N_1)}. \label{eqn:numclasses} \end{equation}

By the union bound, the probability that there exists a $Y$ such that  $(Y, Z)$ is bad is at most $2^{H(\theta) N_1}$ times (\ref{eqn:mixm4}) times  (\ref{eqn:numclasses}). Since $|Y| \approx \theta N_1$ this completes the proof of Lemma~\ref{lem:mixing+measure}.
\end{proof}

\end{document}